\documentclass[accepted]{uai2026} 
                        
\usepackage[american]{babel}

\usepackage{natbib} 
\usepackage{booktabs} 
\usepackage{tikz} 
\usepackage{amsmath}
\usepackage{algorithm}
\usepackage{algorithmic}

\usepackage{mathtools}
\usepackage{amsthm}
\usepackage{bm}
\usepackage{subcaption}
\usepackage{amsfonts}
\newtheorem{proposition}{Proposition}
\newtheorem{lemma}{Lemma}
\newcommand{\rkone}[1]{\textbf{\textcolor{red}{#1}}}
\newcommand{\rktwo}[1]{\textcolor{blue}{#1}}
\newcommand{\rkthree}[1]{\textcolor{teal}{#1}}

\title{Implicit Variational Rejection Sampling}

\author[1,2]{Jian~Xu\thanks{Email: \texttt{jian.xu@riken.jp}}}
\author[3]{Shigui~Li}
\author[3]{Wei~Chen}
\author[3]{Jiacheng~Li}
\author[3]{Zhiqi~Lin}
\author[3]{\protect\\ Delu~Zeng\thanks{Corresponding author. Email: \texttt{dlzeng@scut.edu.cn}}}
\author[4]{Xinghao~Ding}
\author[5]{John~Paisley}
\author[2]{Qibin~Zhao\thanks{Corresponding author. Email: \texttt{qibin.zhao@riken.jp}}}
\affil[1]{RIKEN iTHEMS}
\affil[2]{RIKEN AIP}
\affil[3]{South China University of Technology}
\affil[4]{Xiamen University}
\affil[5]{Columbia University}
  
\begin{document}
\maketitle

\begin{abstract}
Variational Inference (VI) is a fundamental inference technique in Bayesian machine learning for approximating complex posterior distributions. Traditional VI often relies on the mean-field factorization, which can inadequately capture true posterior complexity. Recent advancements have leveraged neural networks to model implicit distributions, offering increased flexibility. However, the practical constraints of neural network architectures still produces inaccuracies. In this paper, we propose a method called Implicit Variational Rejection Sampling (IVRS), which integrates implicit distributions with rejection sampling to improve the posterior approximation. Our method uses neural networks to construct implicit proposal distributions, and rejection sampling with a discriminator network that estimates the density ratio between the implicit proposal and the true posterior for refining the approximation. Towards this end, we introduce the Implicit Resampling Evidence Lower Bound (IR-ELBO) as a metric to characterize the resampled distribution's quality and derive a tighter variational lower bound. Experimental results demonstrate that our method outperforms traditional variational inference techniques.
\end{abstract}

\section{Introduction} 

Variational Inference (VI) has emerged as a fundamental technique in Bayesian machine learning for approximating complex posterior distributions \citep{jordan1999introduction, hoffman2013stochastic}. Traditional VI methods frequently rely on a mean-field assumption \citep{blei2017variational}, which trades off posterior expressiveness for computational tractability. To address this limitation, implicit distributions, which are typically modeled using neural networks, have been proposed to leverage their flexibility when approximating complex posterior distributions \citep{mescheder2017adversarial, huszar2017variational, titsias2019unbiased, shi2017kernel}; such implicit and diffusion-based constructions have also been extended to richer Bayesian models such as deep Gaussian processes \citep{xu2024sparse, xu2026diffusion}. While neural networks are highly expressive in theory \citep{hornik1989multilayer, krizhevsky2012imagenet, lecun2015deep}, they may still struggle to match complex true posteriors in practice, especially under limited capacity, poor initialization, or optimization difficulty 
 \citep{arora2017generalization}. As a result, the posterior approximations with neural networks are not robust as an off-the-shelf method. 

To improve neural network posterior approximations, we propose Implicit Variational Rejection Sampling (IVRS), which leverages rejection sampling \citep{gilks1992adaptive} to better exploit the strengths of implicit distributions. We first use neural networks to construct implicit distributions that serve as proposals. We then design an acceptance probability function related to the density ratio between the proposal distribution and the true posterior, and apply rejection sampling to generate resampled samples. A discriminator network is used to approximate the density ratio, thereby refining the proposal distribution into a more accurate posterior approximation. By incorporating adversarial training techniques, this approach enables us to construct an Implicitly Resampled Evidence Lower Bound (IR-ELBO). 

We summarize our contributions below:
\begin{enumerate}
    \item[1)] We introduce Implicit Variational Rejection Sampling (IVRS) to combine implicit distributions with rejection sampling for more accurate variational inference with neural networks.
    \item[2)] By incorporating adversarial training techniques, we construct an Implicit Resampling Evidence Lower Bound (IR-ELBO) and analyze the resampled distribution, particularly its reduced KL divergence from the true posterior, providing theoretical support for improved accuracy.
    \item[3)] We demonstrate through experiments that IVRS can outperform traditional variational inference methods in terms of accuracy and efficiency.
\end{enumerate}

\section{Model Framework}

\subsection{Bayesian Generative Models}

Consider an unsupervised generative model for a dataset \(\mathcal{D} = \{\mathbf{x}_i\}_{i=1}^N\) that has latent variables \(\mathbf{z}\) and model parameters \(\mathbf{\theta}\). The joint distribution of the model has the form
\begin{equation}
\label{1}
p(\mathbf{x}, \mathbf{z} | \theta) = p(\mathbf{z}) p(\mathbf{x}|\mathbf{z}, \theta),    
\end{equation}
where \(p(\mathbf{z})\) is the prior distribution of \(\mathbf{z}\) and \(p(\mathbf{x}|\mathbf{z}, \theta)\) is a parametric generative model. In more traditional models, such as the Gaussian Mixture Model (GMM), this distribution is often specified through manual design. With the advance of deep learning, and generative models in particular as exemplified by Variational Autoencoders (VAEs), this distribution is frequently parameterized using a neural network. While this framework can be extended to supervised learning scenarios, we develop our method in the unsupervised learning regime.


\subsection{Variational Inference}
The goal of inference is to model the posterior distribution of the latent variables in Equation \ref{1}. For non-conjugate models, such as those involving deep learning architectures, the posterior distribution is highly complex and requires approximate methods. Variational inference (VI) provides a KL divergence approximation to the posterior distribution \(p(\mathbf{z}|\mathbf{x})\) using a predefined variational distribution \(q(\mathbf{z}| \phi)\) by maximizing the Evidence Lower Bound (ELBO),
\begin{equation}
\label{elbo}
\mathcal{L}(\mathbf{x}, \theta, \phi) = \mathbb{E}_{q(\mathbf{z}| \phi)} \left[ \log p(\mathbf{x}, \mathbf{z}| \theta) - \log q(\mathbf{z}| \phi) \right].
\end{equation}
The traditional mean-field approximation assumes a factorized form for the variational distribution,
\begin{equation}
q(\mathbf{z}|\mathbf{x}, \phi) = \textstyle\prod_{i=1}^m q(\mathbf{z}_i|\mathbf{x}, \phi_i),
\end{equation}
where \(m\) represents the number of factors in the decomposition, and each \(q(\mathbf{z}_i|\mathbf{x}, \phi_i)\) is often a simple parametric distribution.

The mean field approximation sacrifices accuracy for tractability. To address this limitation, the implicit variational inference method employs a neural network parameterization of the variational distribution. These methods aim to capture more accurate, complex posterior structures by leveraging the expressive power of neural networks. They take the form
\begin{equation}
\label{4}
\mathbf{z}\sim q_{\phi}(\mathbf{z}|\mathbf{x})\quad \longrightarrow\quad  \mathbf{z} = f_{\phi}\left( \mathbf{x},\epsilon \right), ~~\epsilon \sim p(\epsilon ).
\end{equation}
Here, \(\phi\) represents the parameters of a neural network, while \(\epsilon\) is drawn independently from a simple distribution such as a Gaussian. Recently, various algorithms have been proposed to effectively train such models, including Adversarial Variational Bayes \citep{mescheder2017adversarial} and Semi-implicit Variational Inference \citep{yin2018semi}. 

Despite their flexibility, traditional neural networks face practical limitations, and their structural design often relies on empirical heuristics. Rejection sampling \citep{naesseth2017reparameterization, jankowiak2023reparameterized} offers a flexible approach to more robust implicit distribution learning without requiring additional model capacity or architectural changes. We therefore view rejection sampling as a complementary mechanism for improving implicit VI, particularly when the variational family lacks sufficient support. In the following section, we introduce a method that incorporates rejection sampling to address these challenges.

\section{Proposed Method}
\subsection{Rejection Sampling}
Rejection sampling is a standard statistical technique for generating samples from a target distribution via a proposal distribution. Given a target distribution \(p_\mathrm{tar}(\mathbf{z})\) and a proposal distribution \(q_\mathrm{pro}(\mathbf{z})\), rejection sampling accepts a sample \(\mathbf{z} \sim q_\mathrm{pro}(\mathbf{z})\) with probability defined by an acceptance probability function $a(\mathbf{z})$ that is proportional to the ratio of the target density to the proposal density,
\begin{equation}
\label{5}
a(\mathbf{z}) = p_\mathrm{tar}(\mathbf{z}) / Mq_\mathrm{pro}(\mathbf{z}),
\end{equation}
where \(M > 0\) and the choice of \(M\) ensures that the acceptance probability is less than or equal to 1. In our model, the target distribution \(p_\mathrm{tar}(\mathbf{z}) = p_\theta(\mathbf{z}|\mathbf{x})\), being the true posterior of the latent variables of model structure Equation \ref{1}. We define the proposal distribution to be the implicit distribution in Equation \ref{5}, \(q_\mathrm{pro}(\mathbf{z}) = q_\phi(\mathbf{z}|\mathbf{x})\). Therefore, we write the acceptance rate function as \(a(\mathbf{z}; \mathbf{x}, \theta, \phi)\). 

However, unlike traditional rejection sampling, calculating the acceptance rate function \(a(\mathbf{z}; \mathbf{x}, \theta, \phi)\) in our model is not analytical. The primary challenges are two-fold: i) The target distribution \(p_\theta(\mathbf{z}|\mathbf{x})\) is the true posterior, typically only representable in the unnormalized joint likelihood form of Bayes rule in Equation \ref{1}; ii) The proposal distribution \(q_\phi(\mathbf{z}|\mathbf{x})\) is an implicit distribution, often generated by  a structure that makes it difficult to determine its probability density function. We next turn to our proposal for addressing these two issues.

\subsection{The Acceptance Probability Function}

To address these two challenges, we first express the target distribution \(p_\theta(\mathbf{z}|\mathbf{x})\) using Bayes rule,
\begin{equation}
p_\theta(\mathbf{z}|\mathbf{x}) = p_\theta(\mathbf{x}|\mathbf{z})p(\mathbf{z}) / p_\theta(\mathbf{x}),
\end{equation}
where \(p_\theta(\mathbf{x}|\mathbf{z})\) is the likelihood, \(p(\mathbf{z})\) is the prior, and \(p_\theta(\mathbf{x})= \int p_\theta(\mathbf{x}|\mathbf{z}) p(\mathbf{z}) \, d\mathbf{z}\) is the evidence. To ensure the acceptance probability is within the range \([0, 1]\), we can ignore the evidence term provided we choose an appropriate scaling factor \(M\) such that
\begin{equation}
\label{7}
a(\mathbf{z}; \mathbf{x}, \theta, \phi) = \frac{p_\theta(\mathbf{x}|\mathbf{z})p(\mathbf{z})}{Mq_\phi(\mathbf{z}|\mathbf{x})} \leq 1.
\end{equation}
To ensure this constraint in practice,  the acceptance rate function is usually  constructed as
\begin{equation}
\label{8}
a(\mathbf{z}; \mathbf{x}, \theta, \phi) = \min \left[ \frac{p_\theta(\mathbf{x}|\mathbf{z}) p(\mathbf{z})}{M q_\phi(\mathbf{z}|\mathbf{x})}, 1 \right].
\end{equation}
However, the min function makes gradient-based optimization for variational posterior parameters challenging. To address this, we adopt the fully differentiable approximation of \cite{grover2018variational},
\begin{equation}
\label{101}
a(\mathbf{z}; \mathbf{x}, \theta, \phi)  = \frac{p_{\theta}(\mathbf{x}|\mathbf{z}) p(\mathbf{z})}{p_{\theta}(\mathbf{x}|\mathbf{z}) p(\mathbf{z}) + M q_{\phi}(\mathbf{z}|\mathbf{x})} \in (0, 1)
\end{equation}
This approximation addresses the first challenge.

For the second challenge, where the proposal distribution \(q_\phi(\mathbf{z}|\mathbf{x})\) is implicitly modeled by a neural network \(\phi\), we estimate it using adversarial training. Specifically, we address the challenge of computing the term \(\log p(\mathbf{z}) - \log q_\phi(\mathbf{z}|\mathbf{x})\) by introducing an additional discriminative network \(T(\mathbf{x}, \mathbf{z})\), which distinguishes between pairs \((\mathbf{x}, \mathbf{z})\) sampled from the true joint distribution \(p(\mathbf{x}, \mathbf{z})\), and pairs \((\mathbf{x}, \mathbf{z})\) sampled using the implicit proposal distribution \(q_\phi(\mathbf{z}|\mathbf{x})\). The objective $D(T)$ for this discriminator \(T(\mathbf{x}, \mathbf{z})\) is
\begin{equation}
\label{9}
\begin{aligned}
D(T)~=~&\mathbb{E}_{p(\mathbf{x})} \mathbb{E}_{q_\phi(\mathbf{z}|\mathbf{x})} \left[ \log \sigma(T(\mathbf{x}, \mathbf{z})) \right] \\+~& \mathbb{E}_{p(\mathbf{x})} \mathbb{E}_{p(\mathbf{z})} \left[ \log (1 - \sigma(T(\mathbf{x}, \mathbf{z}))) \right],  
\end{aligned}
\end{equation}
where \(\sigma(t) = \frac{1}{1 + e^{-t}}\) denotes the sigmoid function. By \cite{goodfellow2014generative} and \cite{mescheder2017adversarial}, the optimal discriminator \(T^*(\mathbf{x},\mathbf{z})\) is
\begin{equation}
\label{10}
T^*(\mathbf{x},\mathbf{z})=\log q_{\phi}(\mathbf{z}|\mathbf{x})-\log p(\mathbf{z}).    
\end{equation}
We see that \(T^*\) can be directly substituted into Equation (\ref{101}) to compute the implicit proposal distribution,
\begin{equation}
\label{11}
a(\mathbf{z};\mathbf{x},\theta ,\phi )=\frac{p_{\theta}(\mathbf{x}|\mathbf{z})}{p_{\theta}(\mathbf{x}|\mathbf{z})+M\exp \left( T^*(\mathbf{x},\mathbf{z}) \right)}.
\end{equation}
As a result, we can effectively perform rejection sampling even in the absence of explicit analytical forms for the proposal distribution.  
Numerous methods are available for estimating density ratios of non-analytical distributions. We employ adversarial training here \citep{goodfellow2014generative, mescheder2017adversarial}, although alternative estimators, such as recent diffusion- and Schr{\"o}dinger-bridge-based density-ratio estimation \citep{chen2025dequantified}, are equally compatible with our framework. Additionally, the expectation in Equation (\ref{9}) is on the outermost layer, so Monte Carlo estimation remains unbiased and is suitable for mini-batch algorithms.

\begin{algorithm}[t]

\caption{Sampler for \( r_{\theta, \phi}(\mathbf{z}|\mathbf{x}) \)}
\begin{algorithmic}[1]
\label{alg1}
\REQUIRE \( a_{\theta, \phi}(\mathbf{z}; \theta, \phi) \), \( q_\phi(\mathbf{z}|\mathbf{x}) \)
\ENSURE \( \mathbf{z} \sim r_{\theta, \phi}(\mathbf{z}|\mathbf{x}) \)
\STATE Perform gradient ascent on \( D(T_\eta) \) in Equation (\ref{9}) with respect to \( \eta \) to obtain \( T_\eta^* \)

\WHILE{True}
    \STATE \( \mathbf{z} \leftarrow \) sample from implict proposal \( q_\phi(\mathbf{z}|\mathbf{x}) \) by Equation (\ref{4})
    \STATE Compute acceptance probability \( a(\mathbf{z}; \mathbf{x}, \theta, \phi) \) by Equation (\ref{11})
    \STATE Sample uniform \( u \sim \mathcal{U}[0, 1] \)
    \IF{ \( u < a(\mathbf{z}; \mathbf{x}, \theta, \phi) \) }
        \STATE Output sample \( \mathbf{z} \)
    \ENDIF
\ENDWHILE
\end{algorithmic}
\end{algorithm}

\subsection{Implicit Resampling ELBO}
Unlike previous implicit variational inference, we us as our variational approximation the distribution resampled via rejection sampling, denoted as
\begin{equation}
r_{\theta, \phi}(\mathbf{z}|\mathbf{x}) = \frac{q_\phi(\mathbf{z}|\mathbf{x}) a(\mathbf{z}; \mathbf{x}, \theta, \phi)}{Z_{\theta, \phi}(\mathbf{x})},
\end{equation}
where
$Z_{\theta, \phi}(\mathbf{x}) = \mathbb{E}_{q_\phi(\mathbf{z}|\mathbf{x})}[a(\mathbf{z}; \mathbf{x}, \theta, \phi)].$

To sample from \(r_{\theta, \phi}\), we follow the procedure defined in Algorithm \ref{alg1}. First, we use a neural network parameterized by \(\eta\) to represent the discriminative network \(T_\eta(\mathbf{x}, \mathbf{z})\). By using gradient-based optimization, we obtain a local optimal value \(T_\eta^*(\mathbf{x}, \mathbf{z})\). Then, through an  accept-reject step, we resample from the implicit proposal distribution \(r_{\theta, \phi}\).  
We then define the \textit{Implicit Resampling Evidence Lower Bound} (IR-ELBO) on the marginal log-likelihood of $\mathbf{x}$. This involves using the implicit distribution as the proposal distribution and the resampled distribution \(r_{\theta, \phi}\) as the variational distribution. By Jensen’s inequality, we have that,
\begin{equation}
\begin{aligned}
\label{150}
\log p_{\theta}(\mathbf{x})&\ge \mathbb{E} _{r_{\theta ,\phi}(\mathbf{z}|\mathbf{x})}\left[ \log \frac{p_{\theta}(\mathbf{x},\mathbf{z})}{r_{\theta, \phi}(\mathbf{z}|\mathbf{x})} \right] \\&=\mathbb{E} _{r_{\theta ,\phi}(\mathbf{z}|\mathbf{x})}\left[ \log \frac{p_{\theta}(\mathbf{x},\mathbf{z})Z_{\theta, \phi}(\mathbf{x})}{q_{\phi}(\mathbf{z}| \mathbf{x})a(\mathbf{z};\mathbf{x}, \theta, \phi )} \right]. 
\end{aligned}
\end{equation}
In this equation, we can use the discriminative network \( T_\eta^*(\mathbf{x}, \mathbf{z}) \) described in Algorithm \ref{alg1} to compute the probability density function of the implicit distribution. Using Equations (\ref{101}) and (\ref{10}), we therefore have that
\begin{align}\label{160}
&\mathbb{E} _{r_{\theta ,\phi}}\left[ \log \frac{p_{\theta}(\mathbf{x},\mathbf{z})Z_{\theta ,\phi}(\mathbf{x})}{q_{\phi}(\mathbf{z}|\mathbf{x})a(\mathbf{z};\mathbf{x},\theta ,\phi )} \right]~= \\
&~~~ \mathbb{E} _{r_{\theta ,\phi}}\bigg[ \log \bigg( \frac{p_{\theta}(\mathbf{x}|\mathbf{z})}{\exp \left( T_{\eta}^{*}\left( \mathbf{x},\mathbf{z} \right) \right)}+M \bigg) \bigg] +\log  Z_{\theta ,\phi}(\mathbf{x}).\nonumber
\end{align}
For the term \(\log Z_{\theta ,\phi}(\mathbf{x})\), we can again use Jensen's inequality to define a lower bound,
\begin{align}\label{14}
\log Z_{\theta ,\phi}(\mathbf{x})&\ge \mathbb{E} _{q_{\phi}(\mathbf{z}|\mathbf{x})}[\log a(\mathbf{z};\theta ,\phi )]\\
&=\mathbb{E} _{q_{\phi}}\bigg[ \log \frac{p_{\theta}\left( \mathbf{x}|\mathbf{z} \right)}{p_{\theta}\left( \mathbf{x}|\mathbf{z} \right) +M\exp \left( T_{\eta}^{*}\left( \mathbf{x},\mathbf{z} \right) \right)} \bigg].\nonumber
\end{align}
Substituting the lower bound for \(\log Z_{\theta ,\phi}(\mathbf{x})\) from Equation (\ref{14}) into Equation (\ref{160}) yields the final loss function, which we call the IR-ELBO. Similar to Equation (\ref{9}), since the expectation is applied to the outermost layer, the Monte Carlo approximation of this objective function remains unbiased and is appropriate for mini-batch algorithms. Samples from the implicit distribution can be directly obtained, and by adjusting the parameter \(M\) they can be resampled.

\subsection{Implicit Variational Rejection Sampling}
Using the above derivations, we propose a new inference method for generative models called Implicit Variational Rejection Sampling (IVRS), which combines the strengths of implicit distributions and rejection sampling to achieve a more accurate posterior approximation. The algorithm is summarized in Algorithm \ref{alg2}. Optimization of the discriminator network \(T_\eta(\mathbf{x}, \mathbf{z})\) is reflected in both Algorithm \ref{alg1} and Algorithm \ref{alg2} --- these steps can be merged in practice.

\begin{algorithm}[t]
\begin{algorithmic}[1]
\caption{Implicit Variational Rejection Sampling}

\label{alg2}
\REQUIRE Data $\mathbf{x}$, model parameters $\theta$, neural network parameters $\phi$ and $\eta$
\ENSURE Optimized parameters $\theta^*$ and $\phi^*$

\STATE \textbf{Sample Generation:} Generate samples $\{\mathbf{z}_i\}_{i=1}^Q$ from implicit proposal $q_\phi(\mathbf{z}|\mathbf{x})$.

\STATE \textbf{Density Ratio:} Use discriminator network $T_\eta(\mathbf{x}, \mathbf{z})$ to estimate density ratio for each $\mathbf{z}_i$.

\STATE \textbf{Rejection Sampling:} Accept/reject $\mathbf{z}_i$ based on acceptance function $a(\mathbf{z}_i; \mathbf{x}_i,\theta, \phi)$. (Alg. \ref{alg1})

\STATE \textbf{ELBO Optimization:} Compute the IR-ELBO by Equation (\ref{160}) and Equation (\ref{14}).
\STATE Update $\theta \leftarrow \theta + \alpha \nabla_\theta \text{IR-ELBO}$.
\STATE Update $\phi \leftarrow \phi + \beta \nabla_\phi \text{IR-ELBO}$.

\STATE Repeat steps 1 to 6 until convergence.

\end{algorithmic}
\end{algorithm}

\paragraph{Discussion of IVRS.}
We briefly analyze the properties of the resampling distribution \( r_{\theta, \phi}(\mathbf{z}|\mathbf{x}) \) and show that it is indeed a better approximation compared to the implicit proposal distribution \( q_\phi(\mathbf{z}|\mathbf{x}) \). To that end, we directly compute the KL divergence between \( r_{\theta, \phi}(\mathbf{z}|\mathbf{x}) \) and the true posterior \( p_{\theta}(\mathbf{z}|\mathbf{x}) \),
\begin{align}\label{15}
&\mathrm{KL}\left( r_{\theta ,\phi}(\mathbf{z}|\mathbf{x})||p_{\theta}(\mathbf{z}|\mathbf{x}) \right) ~= ~~~\\&~~~~~\int{\frac{q_{\phi}(\mathbf{z}|\mathbf{x})a(\mathbf{z};\mathbf{x}, \theta ,\phi )}{Z_{\theta ,\phi}(\mathbf{x})}\log \frac{q_{\phi}(\mathbf{z}|\mathbf{x})a(\mathbf{z};\mathbf{x}, \theta, \phi )}{Z_{\theta ,\phi}(\mathbf{x})p_{\theta}(\mathbf{z}|\mathbf{x})}}d\mathbf{z}.\nonumber
\end{align}
We directly substitute Equation (\ref{101}) into the above KL divergence. The RHS can then be rewritten as
\begin{equation}
\label{16}
\int{\frac{p_{\theta}(\mathbf{z}|\mathbf{x})q_{\phi}(\mathbf{z}|\mathbf{x})}{\Lambda}\log \frac{q_{\phi}(\mathbf{z}|\mathbf{x})}{\Lambda}}d\mathbf{z},
\end{equation}
where $\Lambda=Z_{\theta ,\phi}(\mathbf{x})\left( p_{\theta}(\mathbf{z}|\mathbf{x})+Mq_{\phi}(\mathbf{z}|\mathbf{x}) \right)$.

We now characterize how this divergence depends on the scaling parameter \(M\).
\begin{proposition}\label{prop:mono}
Consider the idealized setting in which the discriminator is optimal, so that the acceptance probability in Equation~(\ref{101}) uses the exact density ratio. Then \( \mathrm{KL}\left( r_{\theta ,\phi}(\mathbf{z}|\mathbf{x}) \parallel p_{\theta}(\mathbf{z}|\mathbf{x}) \right) \) is a monotonically non-increasing function of \(M\). Moreover, as \(M\rightarrow 0\) the acceptance rate approaches \(1\) and \( r_{\theta ,\phi} \) approaches \( q_\phi(\mathbf{z}|\mathbf{x})\), whereas as \(M \rightarrow \infty\) the acceptance rate approaches \(0\) and \( r_{\theta ,\phi} \) approaches the true posterior \( p_{\theta}(\mathbf{z}|\mathbf{x}) \).
\end{proposition}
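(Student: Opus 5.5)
The plan is to rewrite $r_{\theta,\phi}$ as a reweighting of the true posterior and then differentiate the KL divergence in $M$ directly. Under the idealized assumption the acceptance function in Equation~(\ref{101}) is exact. The evidence $p_\theta(\mathbf{x})$ can be absorbed into $M$, which is just a positive rescaling, so $a(\mathbf{z}) = p_\theta(\mathbf{z}|\mathbf{x})/(p_\theta(\mathbf{z}|\mathbf{x}) + M q_\phi(\mathbf{z}|\mathbf{x}))$. Write $w(\mathbf{z}) = p_\theta(\mathbf{z}|\mathbf{x})/q_\phi(\mathbf{z}|\mathbf{x})$ and $h_M(\mathbf{z}) = 1/(w(\mathbf{z})+M)$. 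Then $q_\phi a = p_\theta\, h_M$. This gives the key identity
\[
r_M(\mathbf{z}) = \frac{p_\theta(\mathbf{z}|\mathbf{x})\, h_M(\mathbf{z})}{\mathbb{E}_{p_\theta}[h_M]},
\]
which is exactly the form in Equation~(\ref{16}). Consequently
\[
\mathrm{KL}(r_M\|p_\theta) = \mathbb{E}_{r_M}[\log h_M] - \log \mathbb{E}_{p_\theta}[h_M].
\]

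For monotonicity, I will compute $\frac{d}{dM}\mathrm{KL}$. Two facts drive the computation:
\[
\partial_M \log h_M = -h_M, \qquad \partial_M \log \mathbb{E}_{p_\theta}[h_M] = -\frac{\mathbb{E}_{p_\theta}[h_M^2]}{\mathbb{E}_{p_\theta}[h_M]} = -\mathbb{E}_{r_M}[h_M].
\]
Hence $\partial_M \log r_M = -h_M + \mathbb{E}_{r_M}[h_M]$. The general identity $\frac{d}{dM}\mathrm{KL}(r_M\|p) = \mathbb{E}_{r_M}[(\log r_M - \log p)\,\partial_M \log r_M]$ holds because $\mathbb{E}_{r_M}[\partial_M \log r_M]=0$. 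Applying it, the constant terms drop out and we get
\[
\frac{d}{dM}\mathrm{KL}(r_M\|p_\theta) = -\mathrm{Cov}_{r_M}\big(\log h_M,\, h_M\big).
\]
Since $\log h$ is an increasing function of $h$, this covariance is nonnegative by Chebyshev's association inequality. Concretely, $\mathbb{E}[(f(Y)-f(Y'))(Y-Y')]\ge 0$ for i.i.d.\ copies $Y,Y'$ and increasing $f$. Therefore the derivative is $\le 0$ and the KL is non-increasing in $M$.

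For the limits, the acceptance rate is $Z_M = \mathbb{E}_{q_\phi}[w/(w+M)]$. The integrand is bounded by $1$ and tends pointwise to $\mathbb{1}\{w>0\}$ as $M\to 0$ and to $0$ as $M\to\infty$. Dominated convergence then gives $Z_M\to 1$, assuming $p_\theta>0$ $q_\phi$-a.e., and $Z_M\to 0$ respectively.

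For the densities, I will use two different representations:
\begin{itemize}
\item As $M\to 0$, write $r_M = q_\phi \cdot \frac{w/(w+M)}{Z_M}$. Then $r_M\to q_\phi$ pointwise, and Scheff\'e's lemma upgrades this to convergence in total variation.
\item As $M\to\infty$, write $r_M = p_\theta \cdot \frac{M h_M}{\mathbb{E}_{p_\theta}[M h_M]}$. Here $M h_M = M/(w+M)\uparrow 1$ pointwise and is bounded by $1$, so $r_M\to p_\theta$ by dominated and monotone convergence, again in total variation via Scheff\'e.
\end{itemize}

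The main obstacle is justifying differentiation under the integral sign and the finiteness of the KL. I plan to handle it by noting that on any compact interval $M\in[M_0,M_1]$ with $M_0>0$, we have $0<h_M\le 1/M_0$. The derivatives $h_M$ and $h_M^2$ are then uniformly bounded, which gives the domination needed for Leibniz's rule. The term $h_M\log h_M$ is also bounded above, and bounded below provided $\mathbb{E}_{p_\theta}[h_M\log(w+M)]<\infty$; I would state this as a mild moment assumption on the ratio $w$. Monotonicity on each compact interval then extends to all $M>0$, and the endpoint behavior follows from the limit arguments above.
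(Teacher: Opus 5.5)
Your proposal is correct and is essentially the paper's proof. With your $w=p_\theta/q_\phi$, the weight $h_M=1/(w+M)$ is exactly the paper's $c_M=w/(1+Mw)$ written with the reciprocal ratio, and your score-function differentiation reaches the same identity $\frac{d}{dM}\mathrm{KL}(r_M\|p_\theta)=-\mathrm{Cov}_{r_M}(h_M,\log h_M)\le 0$ via the same independent-copy inequality; your dominated-convergence and Scheff\'e arguments for the two limits, which the appendix does not actually prove, are a useful addition. Two small corrections: the $M\to\infty$ limit also needs $q_\phi>0$ $p_\theta$-a.e.\ (otherwise $Mh_M\equiv 0$ on $\{q_\phi=0<p_\theta\}$ and $r_M$ tends to $p_\theta$ conditioned on $\{q_\phi>0\}$), and your extra moment assumption is unnecessary because $h_M\log h_M=-\log(w+M)/(w+M)$ is bounded uniformly for $M\ge M_0>0$.
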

The key step is to write the acceptance probability in the reduced form \(a_M(\mathbf{z})=1/(1+M\,w(\mathbf{z}))\), with density ratio \(w(\mathbf{z})=q_\phi(\mathbf{z}|\mathbf{x})/p_\theta(\mathbf{z}|\mathbf{x})\). Differentiating the divergence then gives \(\tfrac{d}{dM}\mathrm{KL}(r_{\theta,\phi}\parallel p_\theta)=-\mathrm{Cov}_{r_{\theta,\phi}}\!\big(c_M,\log c_M\big)\le 0\), where \(c_M=w/(1+Mw)\) and the inequality follows because \(\mathrm{Cov}(X,f(X))\ge 0\) for any monotone non-decreasing \(f\). A complete derivation is provided in Appendix~\ref{app:mono}. It follows that, in our algorithm, $M$ contributes positively to model improvement by driving $r_{\theta, \phi}$ to provide a more accurate approximation compared to the implicit proposal distribution $q_\phi(\mathbf{z}|\mathbf{x})$, thereby achieving a tighter variational lower bound. In our experiments we empirically determine \(M\) by cross-validation. When the discriminator is only approximately optimal, the acceptance probability and the induced resampled distribution become approximate, so the monotonicity above is best understood as an idealized analytical property rather than a strict guarantee.

\section{Experiments}
We compare IVRS with other ELBO-based variational inference methods, including the Adversarial Variational Bayes (AVB) \citep{mescheder2017adversarial} and Semi-Implicit Variational Inference (SIVI) \citep{yin2018semi}, across a range of unsupervised learning tasks. These include some illustrative studies on several toy examples, followed by a comparison one a regression tast with Bayesian Neural Networks (BNNs) and an autoencoder task. We also consider variants of SIVI, such as UIVI \citep{titsias2019unbiased}. 
We use the Adam optimizer \citep{kingma2014adam} and empirically select \( M \) for training via cross-validation. All experiments were conducted on an RTX 4090. The code will be available with a final draft of the paper.

\begin{figure*}[t!]
    \centering
    \begin{subfigure}[b]{0.245\textwidth}
        \centering
        \includegraphics[trim={20mm 12mm 15mm 5mm},clip,width=\textwidth]{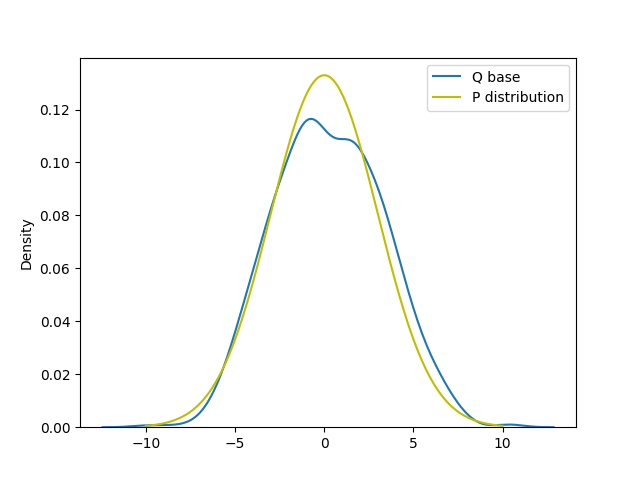}
        \caption{1D Gaussian }
    \end{subfigure} 
    \begin{subfigure}[b]{0.245\textwidth}
        \centering
        \includegraphics[trim={20mm 12mm 15mm 5mm},clip,width=\textwidth]{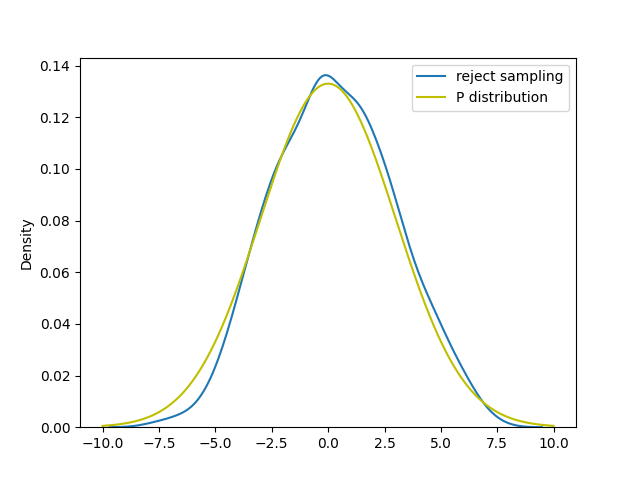}
        \caption{Rejection sampling}
    \end{subfigure} 
    \begin{subfigure}[b]{0.245\textwidth}
        \centering
        \includegraphics[trim={20mm 12mm 15mm 5mm},clip,width=\textwidth]{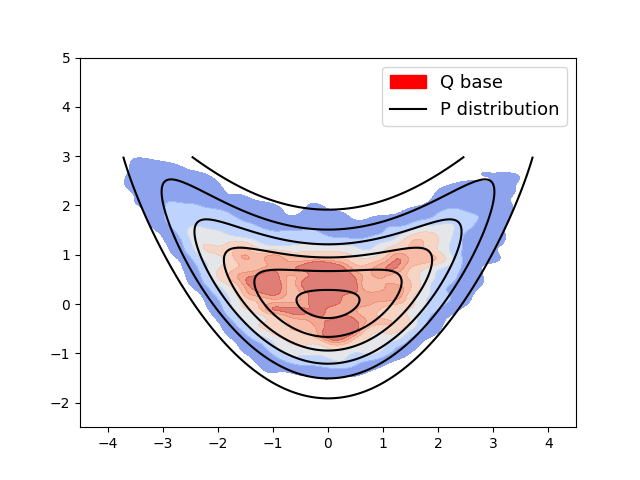}
        \caption{Banana shape}
    \end{subfigure} 
    \begin{subfigure}[b]{0.245\textwidth}
        \centering
        \includegraphics[trim={20mm 12mm 15mm 5mm},clip,width=\textwidth]{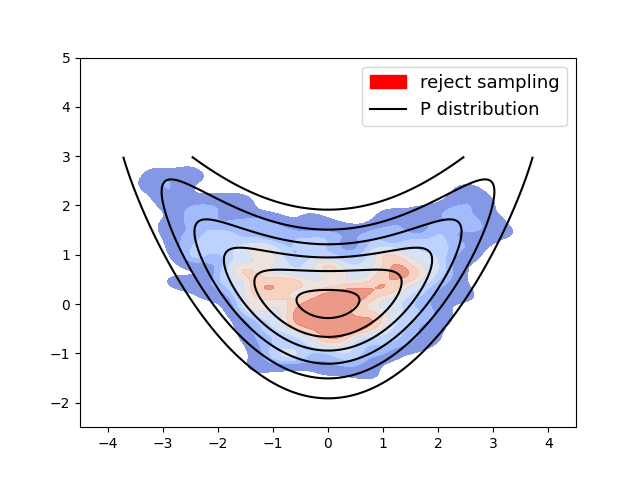}
        \caption{Rejection sampling}
    \end{subfigure} \\
    
    \begin{subfigure}[b]{0.245\textwidth}
        \centering
        \includegraphics[trim={20mm 12mm 15mm 5mm},clip,width=\textwidth]{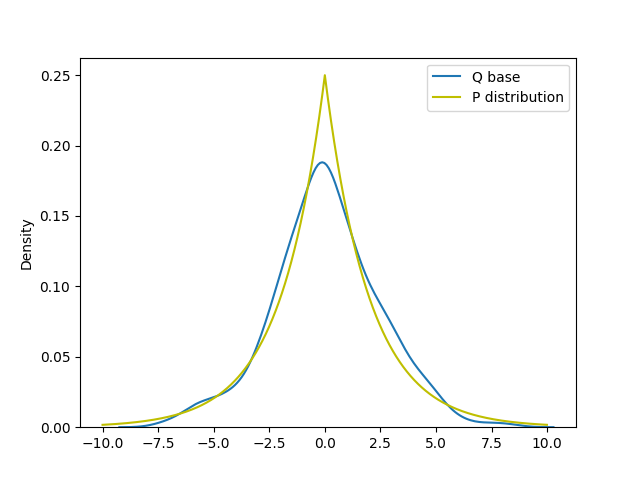}
        \caption{1D Laplace}
    \end{subfigure} 
    \begin{subfigure}[b]{0.245\textwidth}
        \centering
        \includegraphics[trim={20mm 12mm 15mm 5mm},clip,width=\textwidth]{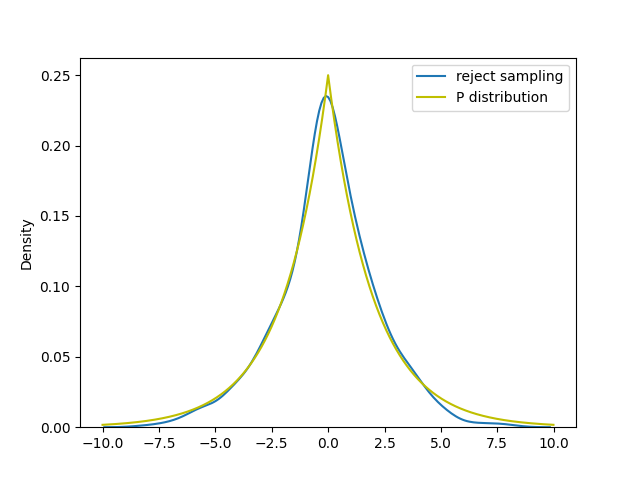}
        \caption{Rejection sampling}
    \end{subfigure} 
    \begin{subfigure}[b]{0.245\textwidth}
        \centering
        \includegraphics[trim={20mm 12mm 15mm 5mm},clip,width=\textwidth]{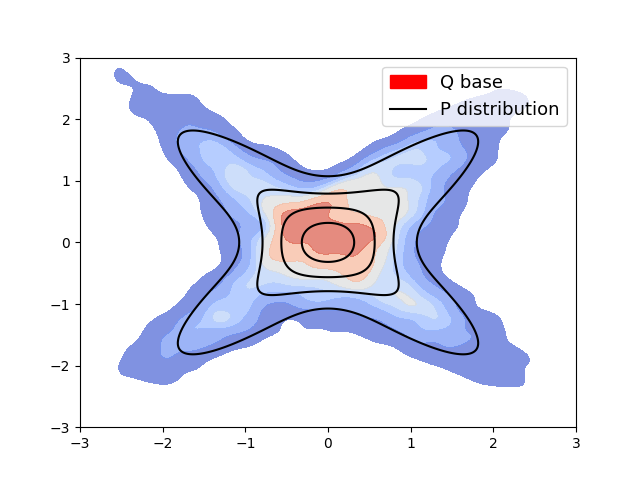}
        \caption{X-shape}
    \end{subfigure} 
    \begin{subfigure}[b]{0.245\textwidth}
        \centering
        \includegraphics[trim={20mm 12mm 15mm 5mm},clip,width=\textwidth]{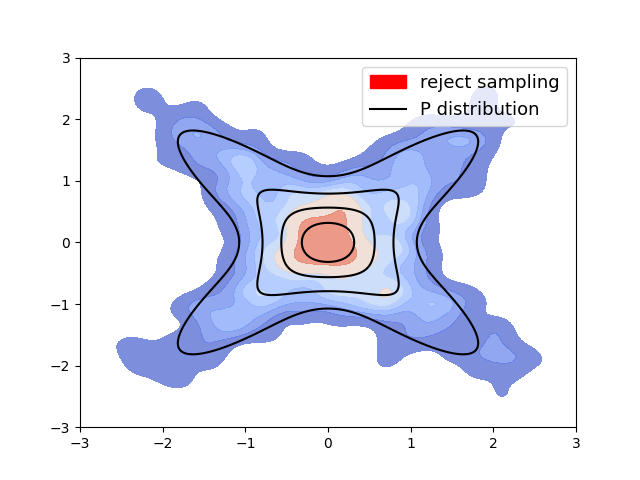}
        \caption{Rejection sampling}
    \end{subfigure} \\
    
    \begin{subfigure}[b]{0.245\textwidth}
        \centering
        \includegraphics[trim={20mm 12mm 15mm 5mm},clip,width=\textwidth]{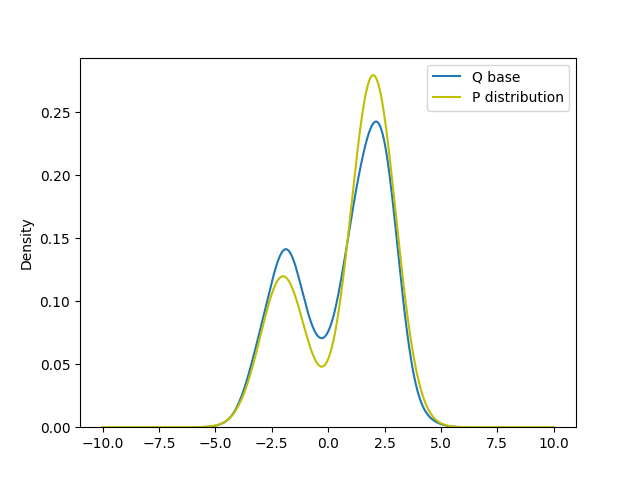}
        \caption{1D GMM}
    \end{subfigure} 
    \begin{subfigure}[b]{0.245\textwidth}
        \centering
        \includegraphics[trim={20mm 12mm 15mm 5mm},clip,width=\textwidth]{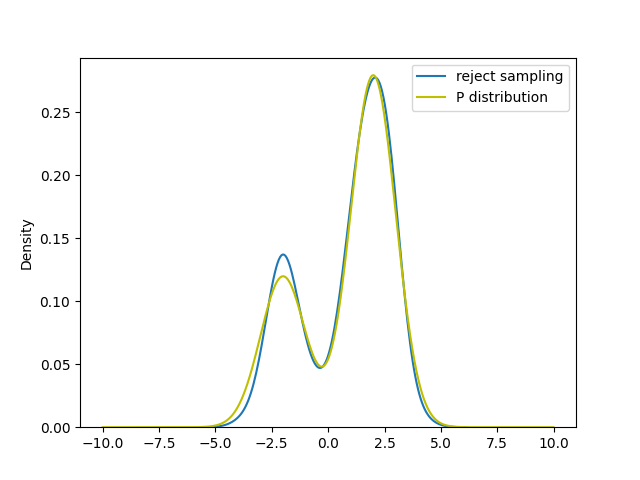}
        \caption{Rejection sampling}
    \end{subfigure} 
    \begin{subfigure}[b]{0.245\textwidth}
        \centering
        \includegraphics[trim={20mm 12mm 15mm 5mm},clip,width=\textwidth]{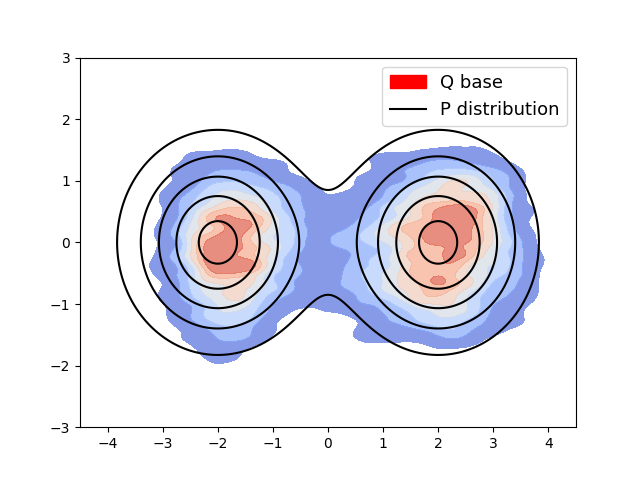}
        \caption{2D GMM}
    \end{subfigure} 
    \begin{subfigure}[b]{0.245\textwidth}
        \centering
        \includegraphics[trim={20mm 12mm 15mm 5mm},clip,width=\textwidth]{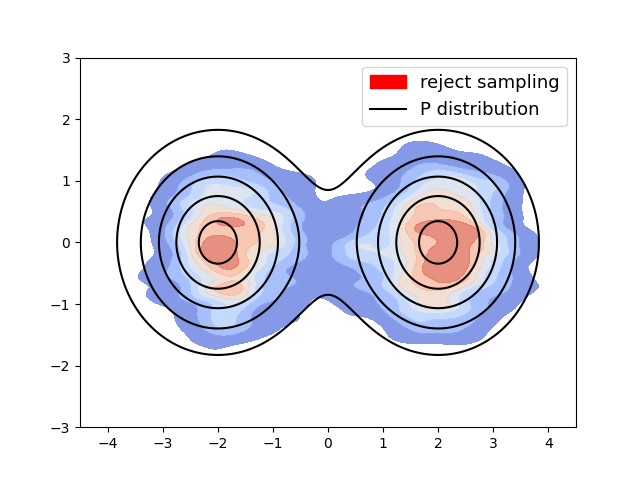}
        \caption{Rejection sampling}
    \end{subfigure}
    \caption{Density estimation tasks for 1D and 2D toy datasets. The subplots sequentially show the kernel density curves estimated using the original implicit distribution $Q$ base for the target distribution \(P\), as well as the curves after applying rejection sampling. The color gradient represents the density of the generated samples, with red indicating higher concentration and blue indicating sparser regions. Table \ref{tab:1} shows quantitative values.}
    \label{fig:toy}
\end{figure*}

\begin{table}[th!]
\renewcommand{\arraystretch}{1.1} 
\resizebox{1\columnwidth}{!}{\begin{tabular}{|c|c|c|c|}
\hline
Distribution  & $-\mathbb{E}_{q_{\phi}}[\log p_\mathrm{tar}]$ & w/ reject samp &$M$ \\
\hline
Gaussian  &$-0.095\small{\pm0.001} $ &$-0.102\small{\pm0.001} $&$0.1$  \\\hline
Laplace  & $-0.107\small{\pm0.001} $ & $-0.112\small{\pm0.001} $&$0.1$ \\\hline
GMM-1D &$-0.101\small{\pm0.000} $  &$-0.104\small{\pm0.000} $&$0.1$  \\\hline
Banana   &$ ~\,\,1.235\small{\pm0.005} $&$~\,\,1.189\small{\pm0.003} $ &$500$\\\hline
X-shape 
& $~\,\,0.744\small{\pm0.008}$ & $~\,\,0.682\small{\pm0.005}$ &$500$\\\hline
GMM-2D  & $~\,\,1.292\small{\pm0.006} $ &  $~\,\,1.214\small{\pm0.004}$&$500$\\
\hline
\end{tabular}}
\caption{A comparison of cross-entropy values before and after rejection sampling for different targets using implicit distributions as proposal distributions. We observe that rejection sampling provides improved approximate posterior samples. (Lower is better.)}
\label{tab:1}
\end{table}


\subsection{Density Estimation of Toy Datasets}

We IVRS to approximate six low dimensional synthetic distributions: A 1D Gaussian distribution, a 1D Laplace distribution, a 1D bimodal Gaussian Mixture Model (GMM), a 2D banana-shaped distribution, a 2D X-shaped mixture of Gaussians, and a 2D bimodal GMM. The six densities are pictured in Figure \ref{fig:toy} and listed in Table \ref{tab:1} along with the performance results.

The dimension of \(\epsilon\) was set to 10, and the network approximation \(f_\phi\) was parameterized by a 4-layer MLP with layer widths $[20, 40, 20, 2]$. The output of \(f_\phi\) was then combined with Gaussian noise. Since this task is straightforward, we adopt a Monte Carlo estimator to estimate \(q_\phi\) and  \(\mathrm{KL}(q_\phi || p_{\text{tar}})\) for gradient-based optimization. The key difference in our method is that the trained neural network was used as the proposal distribution, followed by rejection sampling using the acceptance probability function defined in Equation \ref{101}. Following \cite{yin2018semi}, we used 100 iterations for each inner-loop of Monte Carlo sampling to estimate the entropy of the implicit distribution. All methods were trained with 50,000 parameter updates.

\begin{table*}[tp!]

\centering
\resizebox{1\textwidth}{!}{
\begin{tabular}{c|cc|cc|cc} 
\hline\hline
& \multicolumn{2}{c|}{\textsc{Boston}} & \multicolumn{2}{c|}{\textsc{Concrete}} & \multicolumn{2}{c}{\textsc{Protein}}\\

& NLL ($\downarrow$) & RMSE ($\downarrow$) & NLL ($\downarrow$) & RMSE ($\downarrow$) & NLL ($\downarrow$) & RMSE ($\downarrow$) \\ 

\hline

AVB & ${2.489 \pm 0.02}$ & ${2.685 \pm 0.03}$ & ${3.406 \pm 0.01}$ & ${7.091 \pm 0.03}$ & ${2.969 \pm 0.05}$ & ${4.670 \pm 0.04}$\\ 
SIVI & $2.481\pm0.00$ & $2.621\pm0.02$ & $3.337 \pm 0.00$ & $6.932 \pm 0.02$ & $2.967 \pm 0.03$ & $4.669 \pm 0.02$ \\ 
UIVI & $2.490\pm0.02$ & $2.617\pm0.03$ & $3.331 \pm 0.01$ & $6.806 \pm 0.02$ & $2.973 \pm 0.03$ & $4.671 \pm 0.02$ \\ 
SIVI-SM & $2.542\pm0.01$ & $2.785\pm0.03$ & $3.229 \pm 0.01$ & $5.973\pm 0.04$ & $3.047 \pm 0.00$ & $5.087 \pm 0.01$\\ 
KSIVI & $2.506\pm0.01$ & $2.555\pm0.02$ & $3.309\pm 0.01$ & $5.750 \pm 0.03$ & $3.034 \pm 0.04$ & $5.027 \pm 0.01$\\ 
\hline
  IVRS & $\bm{2.365 \pm 0.03}$ & $\bm{2.421 \pm 0.03}$ & $\bm{2.964 \pm 0.01}$ & $\bm{5.68 \pm 0.04}$ & $\bm{2.794 \pm 0.04}$ & $\bm{4.601 \pm 0.03}$\\ 
\hline\hline
\end{tabular}}
\vspace{12pt}

\resizebox{1\textwidth}{!}{
\begin{tabular}{c|cc|cc|cc} 
\hline\hline
& \multicolumn{2}{c|}{\textsc{Power}} & \multicolumn{2}{c|}{\textsc{Wine}} & \multicolumn{2}{c}{\textsc{Yacht}}\\

& NLL ($\downarrow$) & RMSE ($\downarrow$) & NLL ($\downarrow$) & RMSE ($\downarrow$) & NLL ($\downarrow$) & RMSE ($\downarrow$)\\ 

\hline

AVB & ${2.795 \pm 0.02}$ & $3.865 \pm 0.02$ & ${0.905 \pm 0.01}$ & ${0.609 \pm 0.00}$ & ${1.751 \pm 0.06}$ & ${1.567 \pm 0.04}$\\ 
SIVI & $2.791\pm0.00$ & $3.861\pm0.01$ & $0.904 \pm 0.00$ & $0.597 \pm 0.00$ & $1.721 \pm 0.03$ & $1.505 \pm 0.07$ \\ 
UIVI & $2.794\pm0.00$ & $3.863\pm0.02$ & $0.907 \pm 0.00$ & $0.613 \pm 0.00$ & $1.808 \pm 0.03$ & $1.569 \pm 0.05$ \\ 
SIVI-SM & $2.822\pm0.00$ & $4.009\pm0.00$ & $0.916 \pm 0.00$ & $0.615 \pm 0.00$ & $1.432 \pm 0.01$ & $\bm{0.884 \pm 0.01}$\\ 
KSIVI & $2.797\pm0.00$ & $3.868 \pm 0.01$ & $0.901 \pm 0.00$ & $0.595 \pm 0.00$ & $1.752 \pm 0.03$ & $1.237 \pm 0.05$\\ 
\hline
IVRS & \bm{$2.670 \pm 0.01$} & \bm{$3.684 \pm 0.03$} & $\bm{0.900 \pm 0.00}$ & $\bm{0.591 \pm 0.00}$ & $\bm{1.421 \pm 0.03}$ & $1.065 \pm 0.04$\\ 
\hline\hline
\end{tabular}
}\caption{Quantitative results for six UCI regression tasks. More accurate posterior sampling allows IVRS to outperform several other VI approximations for learning the same Bayesian neural network.}\label{tab:reg2results}
\end{table*}

Figure \ref{fig:toy} shows the contour plots of the synthetic distributions, along with kernel density estimates from samples drawn from the trained implicit distributions \(q_\phi\). Additionally, Figure \ref{fig:toy} compares the distributions before and after applying rejection sampling. The results show the neural network's slight misalignment with the target distribution contour. However, after applying rejection sampling, our method demonstrates improved approximation with better alignment to the target distribution as expected. Due to the challenges in normalizing the distribution after rejection sampling, we instead report the cross-entropy between the target distributions and the approximate distributions in Table \ref{tab:1}. We conducted 10 runs and report the mean and standard deviation of the cross-entropy values. As shown, our method outperforms across all toy target distributions by reducing the cross-entropy, further validating its effectiveness.

\subsection{Bayesian Neural Networks}
We next consider the problem of sampling from the posterior of a Bayesian neural network (BNN). In this scenario, the latent model variables \(\mathbf{z}\) correspond to the BNN weights. We utilize a two-layer network with 50 hidden units and ReLU activation functions. We compare our method with AVB, SIVI, and several SIVI variants, including UIVI, SIVI-SM \citep{yu2023semi}, and KSIVI \citep{cheng2024kernel}. We use six common UCI datasets to perform these experiments. Each dataset is randomly partitioned, with 90\% used for training and 10\% for testing. Both the proposal distribution \(\phi\) and the discriminator \(\eta\) are modeled using four-layer fully connected neural networks. The results are averaged over 10 random trials. 

Table~\ref{tab:reg2results} presents the average test root mean squared error (RMSE) and negative log-likelihood (NLL) along with their standard deviations. The six UCI datasets considered are indicated by their names. As is evident, IVRS achieves competitive or superior performance compared to the baselines on nearly all problems, indicating that rejection sampling provides a more accurate representation of the BNN model variables. Additional ablations---fixed-architecture comparisons, discriminator-optimization sensitivity, latent-dimensionality scaling, and a decomposition of the gains---are provided in Appendix~\ref{sec:ablation}.

\subsection{Sensitivity to the Rejection Hyperparameter $M$}
The hyperparameter $M$ controls the acceptance probability and directly shapes the resampled variational distribution $r_{\theta,\phi}(\mathbf{z}\mid\mathbf{x})$: increasing $M$ tightens the approximation by pushing $r_{\theta,\phi}$ toward the true posterior, at the cost of a lower acceptance rate and higher computational overhead. We evaluate IVRS with $M\in\{0.1,1,10,100,500\}$ on the three UCI BNN benchmarks (\textit{Boston}, \textit{Concrete}, \textit{Protein}), keeping all other settings fixed and averaging over $10$ runs. As shown in Table~\ref{tab:ablation_M}, larger $M$ consistently lowers both NLL and RMSE---empirically confirming the monotonicity established in Proposition~\ref{prop:mono}---while the acceptance rate decreases and the relative cost grows only moderately (within $2$--$3\times$ even at $M=500$). The main results in Table~\ref{tab:reg2results} correspond to $M=100$; moderate values $M\in[10,100]$ already capture most of the gains at a reasonable cost, which is why we select $M$ in this range via lightweight cross-validation.

\begin{table*}[t]
\centering
\caption{Impact of the rejection hyperparameter $M$ on performance and efficiency. Relative cost is computed as the inverse of the acceptance rate. Lower NLL and RMSE indicate better performance.}
\label{tab:ablation_M}
\small
\begin{tabular}{l c c c c c}
\toprule
Dataset & $M$ & NLL ($\downarrow$) & RMSE ($\downarrow$) & Accept Rate & Rel. Cost \\
\midrule
Boston & 0.1   & 2.478 & 2.648 & 92.3\% & 1.08$\times$ \\
       & 1     & 2.412 & 2.502 & 83.7\% & 1.19$\times$ \\
       & 10    & 2.385 & 2.451 & 71.2\% & 1.40$\times$ \\
       & 100   & 2.365 & 2.421 & 54.8\% & 1.82$\times$ \\
       & 500   & 2.361 & 2.418 & 41.5\% & 2.41$\times$ \\
\midrule
Concrete & 0.1 & 3.285 & 6.845 & 88.5\% & 1.13$\times$ \\
         & 1   & 3.142 & 6.254 & 76.8\% & 1.30$\times$ \\
         & 10  & 3.021 & 5.892 & 63.4\% & 1.58$\times$ \\
         & 100 & 2.964 & 5.680 & 48.2\% & 2.07$\times$ \\
         & 500 & 2.951 & 5.652 & 35.6\% & 2.81$\times$ \\
\midrule
Protein & 0.1  & 2.928 & 4.658 & 94.1\% & 1.06$\times$ \\
        & 1    & 2.857 & 4.632 & 85.2\% & 1.17$\times$ \\
        & 10   & 2.812 & 4.614 & 73.8\% & 1.35$\times$ \\
        & 100  & 2.794 & 4.601 & 61.5\% & 1.63$\times$ \\
        & 500  & 2.789 & 4.598 & 48.7\% & 2.05$\times$ \\
\bottomrule
\end{tabular}
\end{table*}

\subsection{Variational Autoencoder Task on MNIST}

Variational Autoencoders (VAEs) \citep{kingma2013auto} are a popular method for unsupervised feature learning and dimensionality reduction that learn encoder \(\phi\) and decoder \(\theta\) parameters by maximizing the Evidence Lower Bound (ELBO) as defined in Equation \ref{elbo}. Typical VAE implementations utilize Gaussian distributions and amortized inference to approximate a complex posterior distribution \(q_\phi\). To improve this approximation various approaches such as Adversarial Variational Bayes (AVB) and Semi-Implicit Variational Inference (SIVI) have been proposed, leveraging adversarial training and semi-implicit hierarchical structures, respectively. 

To evaluate our proposed Implicit Variational Rejection Sampling (IVRS) on the VAE inference problem, we conducted experiments on the MNIST dataset and compare with AVB and other methods. We trained our model on 60,000 training samples and evaluated performance on 10,000 test samples. The encoder is designed as a two-layer convolutional neural network (CNN) to map to the latent space, while the decoder consists of a four-layer transposed convolution network for image reconstruction. Figure \ref{fig2} shows examples sampled from the test set and our model, demonstrating that our method generates images with a closer resemblance to the ground truth compared to the baseline AVB method, owing to the improved sampling quality introduced by rejection sampling.
\begin{figure}[tp!]
    \centering
    \begin{subfigure}[b]{1\columnwidth}
        \centering
        \includegraphics[trim={22.5mm 5mm 15mm 5mm},clip,width=1\textwidth]{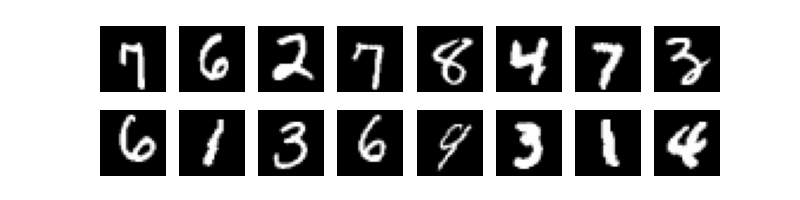}
        \caption{ Ground Truth}
    \end{subfigure}
    \begin{subfigure}[b]{1\columnwidth}
        \centering
        \includegraphics[trim={22.5mm 5mm 15mm 5mm},clip,width=1\textwidth]{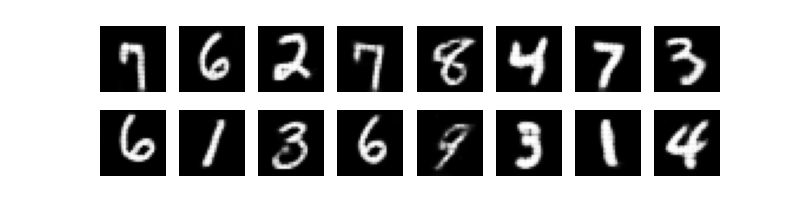}
        \caption{ AVB}
    \end{subfigure}
    \begin{subfigure}[b]{1\columnwidth}
        \centering
        \includegraphics[trim={22.5mm 5mm 15mm 5mm},clip,width=1\textwidth]{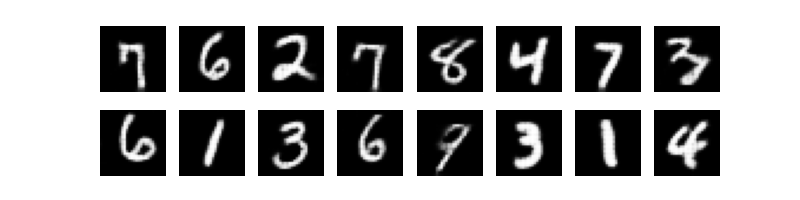}
        \caption{ IVRS (ours)}
    \end{subfigure}

    \caption{Examples of 16 MNIST images from the test set using the VAE learned by AVB and IVRS. Our method generates images with a closer resemblance to the ground truth compared to the baseline due to the rejection sampling.}
    \label{fig2}
\end{figure}

\begin{table}[tp!]

\centering
\resizebox{1\columnwidth}{!}{\begin{tabular}{|c|c|c|c|c|}

\hline
Method & batch & iter & per epoch & per iter\\
\hline
AVB & 64 & 938 & 19.11s& 0.020s\\
\hline
IVRS (Ours) & 64 & 938 & 24.72s&0.026s \\
\hline
\end{tabular}}
\caption{An empirical time analysis comparing the AVB method and our proposed IVRS on the MNIST dataset. GPU-accelerated parallelization is able to reduce the impact of additional computation required by our method.}
\label{time}
\end{table}

To evaluate the efficiency of our model, we conducted an empirical time analysis comparing IVRS to the baseline AVB on MNIST. The results are shown in Table \ref{time}. Using a batch size of 64 during training, we observed that our model incurs only a slight increase in computational cost, showing the advantage of our GPU-accelerated parallel sampling implementation.

\begin{table}[t!]
\setlength{\tabcolsep}{4pt} 
\renewcommand{\arraystretch}{1.1} 
\centering
\resizebox{0.8\columnwidth}{!}{
\begin{tabular}{|c|c|}
\hline
\textbf{Methods} & \textbf{NLE} \\
\hline
\multicolumn{2}{|c|}{\textbf{Results from \cite{burda2015importance}} }  \\
\hline
\begin{tabular}{l}
VAE + IWAE \\
IWAE + IWAE
\end{tabular} & 
\begin{tabular}{l}
$86.76$ \\
$84.78$
\end{tabular} \\
\hline
\multicolumn{2}{|c|}{\textbf{Results from \cite{salimans2015markov}}} \\
\hline
\begin{tabular}{l}
VAE + HVI (1 leapfrog step) \\
VAE + HVI (4 leapfrog steps) \\
VAE + HVI (8 leapfrog steps)
\end{tabular} & 
\begin{tabular}{l}
$88.08$ \\
$86.40$ \\
$85.51$
\end{tabular} \\
\hline
\multicolumn{2}{|c|}{\textbf{Results from \cite{rezende2015variational}}} \\
\hline
\begin{tabular}{l}
VAE + NICE \citep{dinh2014nice}  $(\mathrm{k}=80)$ \\
VAE + NF $(\mathrm{k}=40)$ \\
VAE + NF $(\mathrm{k}=80)$
\end{tabular} & 
\begin{tabular}{l}
$ 87.2$ \\
$ 85.7$ \\
$ 85.1$
\end{tabular} \\
\hline
\multicolumn{2}{|c|}{\textbf{Results from \cite{gregor2015draw}}} \\
\hline
\begin{tabular}{l}
NADE \\
DBM 2hl \\
DBN 2hl \\
EoNADE-5 2hl (128 orderings) \\
DARN 1hl
\end{tabular} & 
\begin{tabular}{l}

$88.33$ \\
$ 84.62$ \\
$ 84.55$ \\
$84.68 $\\
$ 84.13$
\end{tabular} \\
\hline
\multicolumn{2}{|c|}{\textbf{Results from \cite{sonderby2016ladder}}} \\
\hline
Auxiliary VAE ($L=1$, IW $=1$) & $ 84.59$ \\
\hline
\multicolumn{2}{|c|}{\textbf{Results from \cite{mescheder2017adversarial}}} \\
\hline
\begin{tabular}{l}
VAE + IAF \cite{kingma2016improved} \\
AVB 
\end{tabular} & 
\begin{tabular}{l}
$ 84.9$ \\
$ 83.7$
\end{tabular} \\
\hline
\multicolumn{2}{|c|}{\textbf{Results from \cite{yin2018semi}}} \\
\hline
SIVI (3 stochastic layers) + IW($\tilde{K}=10$) & $83.25$ \\
\hline
\multicolumn{2}{|c|}{\textbf{Results from \cite{hazami2022efficientvdvae}}} \\
\hline
\begin{tabular}{l}

PixelVAE++ \citep{sadeghi2019pixelvae++}\\
Local Mask PixelCNN \citep{jain2020locally}\\
NVAE \citep{vahdat2020nvae} \\
CR-NVAE \citep{sinha2021consistency} \\
Efficient-VDVAE \citep{hazami2022efficientvdvae} \\

\end{tabular} & 
\begin{tabular}{l}
$78.00$\\
$77.58$\\
$78.01$ \\
${\rkone{76.93}}^\star$ \\
$79.09$ \\

\end{tabular} \\
\hline
\multicolumn{2}{|c|}{\textbf{Results from \cite{kuzina2024hierarchical}}} \\
\hline
DVP-VAE \citep{kuzina2024hierarchical} & $\rktwo{77.10}$ \\
\hline
\textbf{IVRS}& $81.78$\\

\textbf{IVRS}+NVAE \citep{vahdat2020nvae} & $\rkthree{77.36}$\\
\hline
\end{tabular}
}

\caption{Comparison of reported Negative Log Evidence (NLE) values across different algorithms on the MNIST dataset. An asterisk ($\star$) indicates results obtained with data augmentation. The \rkone{best}, \rktwo{second-best}, and \rkthree{third-best} results are highlighted.}
\label{tab:comparison}
\end{table}

We also quantitatively benchmarked our model against several well-established autoencoding methods, including Importance Weighted Autoencoders (IWAE) \citep{burda2015importance}, Hamiltonian Variational Inference (HVI) \citep{salimans2015markov}, and Normalizing Flows (NF) \citep{rezende2015variational}. Many approaches have also adopted deep architectures for the VAE encoder to achieve more effective feature extraction. Therefore, we also report comparisons with recent deep architecture-based VAE improvements such as NVAE \citep{vahdat2020nvae}, CR-NVAE \citep{sinha2021consistency}, and Efficient-VDVAE \citep{hazami2022efficientvdvae}.

As shown in Table \ref{tab:comparison}, our vanilla IVRS method achieves a Negative Log Evidence (NLE) score of \(81.78\), surpassing traditional variational inference methods like SIVI and AVB. Additionally, we considered integrating IVRS within the NVAE architecture, which further boosts performance to an NLE of \(77.36\). This also represents an improvement over the original NVAE at \(78.01\). Although CR-NVAE \citep{sinha2021consistency} is the one approach to demonstrate slightly better results, that approach relies on additional data augmentation techniques. While data augmentation is highly effective in preventing overfitting, our focus is on demonstrating the competitiveness of rejection sampling in enhancing implicit variational inference, but we include those results for completeness. We further compare against the more recent DVP-VAE \citep{kuzina2024hierarchical}, a hierarchical VAE with a diffusion-based VampPrior, which reports an NLE of \(77.10\); this is directly comparable to our IVRS+NVAE result of \(77.36\), while IVRS additionally offers a general inference-refinement mechanism that is orthogonal to the choice of prior.

\subsection{Results on CIFAR-10  and ImageNet}
To further evaluate IVRS on higher-dimensional image data, we conducted VAE learning experiments on both the CIFAR-10 and ImageNet datasets and compare against baseline inference methods.

The CIFAR-10 dataset consists of 50,000 training images and 10,000 test images, each consisting of \(32 \times 32\) pixels and 3 color channels (RGB) across 10 classes. The ImageNet dataset \citep{deng2009imagenet}, a large-scale dataset commonly used for image classification and generative modeling tasks, includes over 14 million images spanning 1,000 object categories. For our experiments, we utilized a subset of ImageNet consisting of \(64 \times 64\) pixel color images, which are more manageable for generative models like VAEs. This subset introduces greater inference complexity due to the diversity of object categories and higher resolution compared to CIFAR-10. Given the increased complexity and higher dimensionality of these datasets compared to MNIST, we adapted the VAE architecture by directly employing the encoder from the deep architecture NVAE \citep{vahdat2020nvae}.

For quantitative evaluation, we calculate bits per dimension (bpd), a standard metric for high-dimensional image datasets. Table~\ref{tab1:comparison} reports the bpd scores for various variational inference methods on CIFAR-10, while Table~\ref{tabi:comparison} presents the corresponding results on ImageNet. As can be seen, the proposed IVRS consistently achieves lower bpd values than a range of classic VAE-based approaches, indicating improved density estimation quality. Notably, while CR-NVAE~\citep{sinha2021consistency} attains strong performance by leveraging additional data augmentation, our controlled experiments under the same augmentation protocol show that incorporating IVRS further improves performance. In particular, on CIFAR-10, CR-NVAE achieves 2.51~bpd with augmentation, and adding IVRS (with $M=1$) reduces this to 2.43~bpd. This demonstrates that IVRS provides a non-trivial gain even on top of strong augmentation-based baselines, suggesting that the proposed inference refinement is orthogonal and complementary to data augmentation. For a more recent point of comparison, the DVP-VAE \citep{kuzina2024hierarchical} reports $2.73$~bpd on CIFAR-10, which is on par with our IVRS+NVAE result of $2.76$~bpd, while IVRS remains complementary as an inference-refinement step. Moreover, despite the additional rejection sampling step, the overall computational overhead remains modest, indicating that IVRS is practical for large-scale VAE training scenarios.

\begin{table}[t]
\renewcommand{\arraystretch}{1.1} 
\centering
\resizebox{1\columnwidth}{!}{
\begin{tabular}{|c|c|}
\hline
\textbf{Methods} & \textbf{bpd}~~~\\
\hline
\begin{tabular}{l}
VAE + IAF \citep{sonderby2016ladder}\\
BIVA \citep{maaloe2019biva} \\
DVAE \citep{vahdat2018dvae++} \\
$\delta$-VAE \citep{razavi2019preventing} \\
PixelVAE++ \citep{sadeghi2019pixelvae++} \\
Local Mask PixelCNN \citep{jain2020locally}\\
MAE \citep{ma2019mae} \\
NVAE \citep{vahdat2020nvae} \\
VDVAE \citep{child2020very}\\
Efficient-VDVAE \citep{hazami2022efficientvdvae}\\
DVP-VAE \citep{kuzina2024hierarchical} \\
CR-NVAE \citep{sinha2021consistency} \\
\end{tabular} &
\begin{tabular}{l}
$3.11$\\
$3.08$\\
$3.38$\\
$2.83$ \\
$2.90$ \\
$2.89$\\
$2.95$ \\
$2.91 $\\
$2.87$\\
$ 2.87$\\
$\rkthree{2.73}$\\
${\rktwo{2.51}}^\star$\\
\end{tabular} \\

\hline
\textbf{IVRS}+NVAE& $2.76~\,$\\
\textbf{IVRS}+CR-NVAE& ${\rkone{2.43}}^\star$\\
\hline
\end{tabular}
}
\caption{Comparison of reported Bits per Dimension (bpd) values among various algorithms for the CIFAR-10 dataset. An asterisk ($\star$) indicates results obtained using data augmentation. The \rkone{best}, \rktwo{second-best}, and \rkthree{third-best} results are highlighted.}
\label{tab1:comparison}
\end{table}

\begin{table}[!t]
\renewcommand{\arraystretch}{1.1} 
\centering
\resizebox{\columnwidth}{!}{
\begin{tabular}{|c|c|}
\hline
\textbf{Methods} & \textbf{bpd}~~~\\
\hline
\begin{tabular}{l}

SPN \citep{menick2018generating} \\
MaCow \citep{ma2019macow} \\

Sparse Transformer \citep{child2019generating} \\
Local Mask PixelCNN \citep{jain2020locally}\\

NVAE \citep{vahdat2020nvae} \\
VDVAE \citep{child2020very}\\

CR-NVAE \citep{sinha2021consistency} \\
\end{tabular} & 
\begin{tabular}{l}

$3.52$\\

$3.69$ \\
$\rkthree{3.44}$\\
$3.64$ \\
$3.58 $\\
$3.52$\\

${\rkone{3.30}}^\star$\\
\end{tabular} \\

\hline
\textbf{IVRS}+NVAE& $\rktwo{3.38}~\,$\\
\hline
\end{tabular}
}
\caption{Comparison of reported Bits per Dimension (bpd) values among various algorithms for the Imagenet 64 $\times$64 dataset. An asterisk ($\star$) indicates results obtained using data augmentation. The \rkone{best}, \rktwo{second-best}, and \rkthree{third-best} results are highlighted.}
\label{tabi:comparison}
\end{table}

\section{Conclusion}
We have introduced Implicit Variational Rejection Sampling, a novel posterior approximation method that enhances variational inference by integrating the flexibility of implicit distributions with the precision of rejection sampling. By optimizing the derived Implicit Resampled Evidence Lower Bound (IR-ELBO), IVRS provides a mechanism for enhancing posterior approximation over traditional VI methods. Our experimental results demonstrate the effectiveness of IVRS across various tasks, including regression problems and VAE-based image modeling, achieving superior quantitative performance over related inference approaches. The supplement below contains more practical details about implementation and limitations, along with an in depth description of related work and ablation study.
\bibliography{uai2026-template}

\clearpage
\appendix
\thispagestyle{empty}

\onecolumn

\section{Proof of the Monotonicity of the KL Divergence in $M$}
\label{app:mono}
Here we give a rigorous proof of Proposition~\ref{prop:mono}: in the idealized setting with an optimal discriminator, the divergence $\mathrm{KL}\!\left(r_{\theta,\phi}(\mathbf{z}|\mathbf{x})\parallel p_\theta(\mathbf{z}|\mathbf{x})\right)$ is monotonically non-increasing in $M$. We first record an elementary covariance inequality.

\begin{lemma}\label{lem:cov}
Let $X$ be a real-valued random variable and let $f$ be a monotone non-decreasing function such that the expectations below exist. Then $\mathrm{Cov}(X, f(X)) \ge 0$.
\end{lemma}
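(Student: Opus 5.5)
The plan is the standard symmetrization (coupling) argument with an independent copy. Let $X'$ be an independent copy of $X$ with the same law. Because $f$ is non-decreasing, $X-X'$ and $f(X)-f(X')$ always have the same sign (or one of them vanishes). Hence, pointwise,
\[
(X-X')\bigl(f(X)-f(X')\bigr)\ge 0 .
\]
Taking expectations preserves this inequality.

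Next, expand the product and use independence together with identical distribution:
\[
\mathbb{E}\bigl[(X-X')(f(X)-f(X'))\bigr]
= 2\,\mathbb{E}[Xf(X)] - 2\,\mathbb{E}[X]\,\mathbb{E}[f(X)]
= 2\,\mathrm{Cov}(X,f(X)).
\]
Combining this identity with the pointwise nonnegativity gives $\mathrm{Cov}(X,f(X))\ge 0$.

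The only point needing care is integrability, and this is where I expect the (mild) obstacle to lie. The hypothesis that the expectations exist should be read as $X$, $f(X)$ and $Xf(X)$ all being integrable. Then each cross term $\mathbb{E}[Xf(X')]=\mathbb{E}[X]\,\mathbb{E}[f(X)]$ is finite by independence, so the expansion is legitimate and involves no $\infty-\infty$.

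An alternative route, which I would mention only as a remark, centres at a point $c$ with $f(c)$ lying between the left and right limits of $f$ at $c$, for example $c=\mathbb{E}[X]$. One then notes that $(X-c)(f(X)-f(c))\ge 0$ and $\mathbb{E}[X-c]=0$. The coupling proof is cleaner, so I would present that one.
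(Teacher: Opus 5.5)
Your proposal is correct and follows essentially the same argument as the paper: introduce an independent copy $X'$, write $\mathrm{Cov}(X,f(X))=\tfrac12\,\mathbb{E}[(X-X')(f(X)-f(X'))]$, and use pointwise nonnegativity of the integrand. You also make the integrability needed for the expansion explicit ($X$, $f(X)$, $Xf(X)$ integrable), which the paper leaves implicit.
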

\begin{proof}
Let $X'$ be an independent copy of $X$. Then
\begin{equation*}
\mathrm{Cov}(X, f(X)) = \tfrac{1}{2}\,\mathbb{E}\!\left[(X-X')\big(f(X)-f(X')\big)\right].
\end{equation*}
Since $f$ is monotone non-decreasing, $(X-X')\big(f(X)-f(X')\big)\ge 0$ for every pair $(X,X')$. Taking expectations gives $\mathrm{Cov}(X,f(X))\ge 0$.
\end{proof}

\begin{proof}[Proof of Proposition~\ref{prop:mono}]
Write $\pi(\mathbf{z}) = p_\theta(\mathbf{z}|\mathbf{x}) = p_\theta(\mathbf{x}|\mathbf{z})p(\mathbf{z})/p_\theta(\mathbf{x})$ for the true posterior, and let the acceptance probability in Equation~(\ref{101}) be
\begin{equation*}
a_M(\mathbf{z}) = \frac{p_\theta(\mathbf{x}|\mathbf{z})p(\mathbf{z})}{p_\theta(\mathbf{x}|\mathbf{z})p(\mathbf{z}) + M q_\phi(\mathbf{z}|\mathbf{x})}.
\end{equation*}
Using $p_\theta(\mathbf{x}|\mathbf{z})p(\mathbf{z}) = p_\theta(\mathbf{x})\pi(\mathbf{z})$, this can be rewritten as
\begin{equation*}
a_M(\mathbf{z}) = \frac{p_\theta(\mathbf{x})\pi(\mathbf{z})}{p_\theta(\mathbf{x})\pi(\mathbf{z}) + M q_\phi(\mathbf{z}|\mathbf{x})}.
\end{equation*}
Defining the density ratio and rescaled parameter
\begin{equation*}
w(\mathbf{z}) = \frac{q_\phi(\mathbf{z}|\mathbf{x})}{\pi(\mathbf{z})}, \qquad \tilde M = \frac{M}{p_\theta(\mathbf{x})},
\end{equation*}
gives $a_M(\mathbf{z}) = 1/(1+\tilde M w(\mathbf{z}))$. Since $p_\theta(\mathbf{x})$ is a positive constant independent of $\mathbf{z}$, we may absorb it into the scalar parameter and equivalently write $a_M(\mathbf{z}) = 1/(1+M w(\mathbf{z}))$.

The induced resampled distribution is
\begin{equation*}
r_M(\mathbf{z}) = \frac{q_\phi(\mathbf{z}|\mathbf{x})\, a_M(\mathbf{z})}{Z_M}, \qquad Z_M = \mathbb{E}_{q_\phi}[a_M(\mathbf{z})].
\end{equation*}
Using $q_\phi(\mathbf{z}|\mathbf{x}) = w(\mathbf{z})\pi(\mathbf{z})$, we obtain
\begin{equation*}
r_M(\mathbf{z}) = \frac{\pi(\mathbf{z})\, c_M(\mathbf{z})}{\tilde Z_M}, \quad c_M(\mathbf{z}) = \frac{w(\mathbf{z})}{1+M w(\mathbf{z})}, \quad \tilde Z_M = \mathbb{E}_\pi[c_M(\mathbf{z})].
\end{equation*}
Hence the divergence to the true posterior is
\begin{equation*}
\mathrm{KL}(r_M\parallel\pi) = \frac{1}{\tilde Z_M}\mathbb{E}_\pi[c_M\log c_M] - \log \tilde Z_M.
\end{equation*}
Differentiating with respect to $M$ and using
\begin{equation*}
\frac{d c_M}{dM} = -c_M^2, \quad \frac{d \tilde Z_M}{dM} = -\mathbb{E}_\pi[c_M^2], \quad \frac{d (c_M\log c_M)}{dM} = -c_M^2(1+\log c_M),
\end{equation*}
the two terms involving $\mathbb{E}_\pi[c_M^2]/\tilde Z_M$ cancel, leaving
\begin{equation*}
\frac{d}{dM}\mathrm{KL}(r_M\parallel\pi) = -\frac{1}{\tilde Z_M}\mathbb{E}_\pi[c_M^2\log c_M] + \frac{\mathbb{E}_\pi[c_M^2]}{\tilde Z_M^2}\mathbb{E}_\pi[c_M\log c_M].
\end{equation*}
Rewriting the expectations under $r_M$ via $\mathbb{E}_{r_M}[g] = \mathbb{E}_\pi[c_M g]/\tilde Z_M$ yields
\begin{equation*}
\frac{d}{dM}\mathrm{KL}(r_M\parallel\pi) = -\mathbb{E}_{r_M}[c_M\log c_M] + \mathbb{E}_{r_M}[c_M]\,\mathbb{E}_{r_M}[\log c_M] = -\mathrm{Cov}_{r_M}(c_M, \log c_M).
\end{equation*}
Applying Lemma~\ref{lem:cov} with $X = c_M$ and the monotone non-decreasing function $f(X) = \log X$ gives $\mathrm{Cov}_{r_M}(c_M,\log c_M)\ge 0$. Therefore
\begin{equation*}
\frac{d}{dM}\mathrm{KL}\!\left(r_M\parallel p_\theta(\mathbf{z}|\mathbf{x})\right) \le 0,
\end{equation*}
so the divergence is monotonically non-increasing in $M$ in the idealized exact-ratio setting. When the discriminator is imperfect, the acceptance probability and the induced resampled distribution are only approximate, so this monotonicity should be interpreted as an idealized analytical property rather than a strict guarantee.
\end{proof}

\section{Related Work}
The main idea of implicit variational inference is to transform a simple base distribution into a more expressive one using a deep neural network \citep{mescheder2017adversarial, huszar2017variational, titsias2019unbiased, shi2017kernel}. To avoid density ratio estimation, semi-implicit variational inference has been proposed, where the variational distributions are formed through a semi-implicit hierarchical construction, and surrogate ELBOs (asymptotically unbiased) are employed for training \citep{yin2018semi, molchanov2019doubly, moens2021efficient, lim2024particle, yu2023semi, cheng2024kernel}. These methods have made improvements to implicit variational inference or semi-implicit variational inference at various levels. However, these methods do not address the limited expressiveness of neural networks, which may still fall short in certain scenarios. Our work aims to address this issue by proposing an implicit rejection sampling algorithm.

Beyond standard latent-variable models, implicit and diffusion-based variational inference has been increasingly applied to richer Bayesian models, including deep Gaussian processes and their dynamical extensions \citep{xu2024sparse, xu2026diffusion, xu2024neural, xu2025variational, xu2025bayesian, xu2025fully}, Bayesian last-layer models \citep{xu2024flexible}, heavy-tailed Student-$t$ processes \citep{xu2024sparsestudent, xu2026sparse}, and nonlinear probabilistic latent-variable models for industrial sensing \citep{chen2024analyzing}. In parallel, diffusion-based generative models have motivated new tools for posterior sampling and density-ratio estimation \citep{chen2025dequantified, li2026evodiff, chen2024rethinking}, which are complementary to the discriminator-based density-ratio estimation used in IVRS.

On the other hand, rejection sampling is a classical method to generate samples from a distribution using samples drawn from a different distribution \citep{gilks1992adaptive, grover2018variational, azadi2018discriminator, stimper2022resampling, verine2024optimal}. Specifically, \cite{grover2018variational} and \cite{stimper2022resampling} have embedded latent rejection sampling within their training processes, applying it within a variational inference and a normalizing flow framework, respectively. We acknowledge their contributions; however, in this work, we address a different problem—leveraging implicit distributions as proposal distributions for rejection sampling and variational inference, creating a novel variational inference algorithm. 

The prior in \cite{bauer2019resampled} is reformulated as a resampled distribution, whereas our method explicitly derives a new evidence lower bound (IR-ELBO) based on rejection sampling. While both approaches use density ratio estimation, our work focuses on leveraging rejection sampling to construct a tighter variational objective by directly approximating the posterior through implicit distributions. 

Equation (2) in \cite{jankowiak2023reparameterized} and our Equation (13) share a similar mathematical form, but a key difference lies in the nature of the variational distribution: in \cite{jankowiak2023reparameterized}, the distribution is explicit, while in our work, it is implicit. This distinction is significant as it aligns with our objective of improving implicit variational inference by utilizing rejection sampling to create more flexible posterior approximations.

Our method can also be extended to importance sampling \cite{burda2015importance}, and could be adapted for use with mixtures of variational distributions. Using mixtures could enhance the expressivity of the approximate posterior and align well with the results from \cite{hotti2024efficient} and \cite{kviman2023cooperation}. Combining the importance-weighted ELBO (IW-ELBO) \cite{burda2015importance} with Eq. (16) could provide a tighter bound. This would involve integrating importance weighting within the rejection sampling framework, potentially amplifying the advantages of both techniques. This presents an exciting direction for future work.

\section{Ablation Study}
\label{sec:ablation}

In this section, we provide additional ablation experiments that complement the sensitivity analysis of the rejection hyperparameter $M$ in the main text. We focus on three representative UCI Bayesian neural network regression benchmarks: \textit{Boston Housing}, \textit{Concrete}, and \textit{Protein}. These datasets allow us to systematically examine the trade-off between posterior accuracy and computational efficiency induced by rejection sampling.

\subsection{Fixed Architecture Comparison}
\label{sec:ablation_architecture}

To isolate the contribution of rejection sampling from increased network capacity, we conduct controlled experiments where all methods share identical neural network architectures. This allows us to explicitly disentangle the effect of posterior refinement via rejection sampling from that of simply adding more model parameters.

Specifically, we compare the following settings:
\begin{enumerate}
    \item Baseline implicit variational inference (Implicit VI) with a fixed 4-layer MLP.
    \item IVRS applied to the same architecture (no increase in network capacity).
    \item Implicit VI with increased network capacity (2$\times$ and 3$\times$ hidden width).
\end{enumerate}
All models are trained and evaluated under identical optimization settings.


Table~\ref{tab:fixed_architecture} reports the results on three representative UCI BNN regression datasets. Several important observations emerge:

\begin{itemize}
    \item \textbf{Rejection sampling outperforms capacity increase.} Applying IVRS with the same base architecture consistently yields larger improvements in NLL and RMSE than doubling or tripling the network width. For example, on the Boston dataset, IVRS achieves a $4.9\%$ NLL improvement over the baseline, whereas doubling network capacity yields only a $1.1\%$ improvement.
    
    \item \textbf{Capacity alone cannot replicate IVRS gains.} Even with 3$\times$ network capacity, implicit VI underperforms IVRS using the original architecture across all datasets. This indicates that the performance gains of IVRS do not stem from increased representational capacity.
    
    \item \textbf{Complementary mechanisms.} These results suggest that rejection sampling improves posterior approximation in a manner fundamentally different from architectural scaling. IVRS refines the variational distribution through sample-level selection rather than function-level expressiveness.
\end{itemize}

Overall, this ablation demonstrates that IVRS provides a principled and parameter-efficient alternative to increasing network capacity, reinforcing the claim that rejection sampling contributes meaningfully beyond standard neural network design choices.

\begin{table}[t]
\centering
\caption{Ablation study with fixed neural network architecture. All models use a 4-layer MLP; [$k$] denotes hidden width per layer. Lower NLL and RMSE indicate better performance.}
\label{tab:fixed_architecture}
\small
\begin{tabular}{l l c c l}
\toprule
Dataset & Method & NLL ($\downarrow$) & RMSE ($\downarrow$) & Architecture \\
\midrule
Boston
& Implicit VI & 2.489 & 2.685 & 4-layer MLP [50] \\
& + Rejection Sampling (IVRS) & \textbf{2.365} & \textbf{2.421} & Same \\
& + 2$\times$ Network Capacity & 2.461 & 2.638 & 4-layer MLP [100] \\
& + 3$\times$ Network Capacity & 2.445 & 2.612 & 4-layer MLP [150] \\
\midrule
Concrete
& Implicit VI & 3.406 & 7.091 & 4-layer MLP [50] \\
& + Rejection Sampling (IVRS) & \textbf{2.964} & \textbf{5.680} & Same \\
& + 2$\times$ Network Capacity & 3.318 & 6.824 & 4-layer MLP [100] \\
& + 3$\times$ Network Capacity & 3.275 & 6.705 & 4-layer MLP [150] \\
\midrule
Protein
& Implicit VI & 2.969 & 4.670 & 4-layer MLP [50] \\
& + Rejection Sampling (IVRS) & \textbf{2.794} & \textbf{4.601} & Same \\
& + 2$\times$ Network Capacity & 2.928 & 4.645 & 4-layer MLP [100] \\
& + 3$\times$ Network Capacity & 2.905 & 4.628 & 4-layer MLP [150] \\
\bottomrule
\end{tabular}
\end{table}
\subsection{Complete Hyperparameter Settings for $M$}
\label{sec:ablation_M_values}

For completeness and reproducibility, we summarize the values of the rejection hyperparameter $M$ used across all experiments.

\paragraph{Toy Density Estimation (Table~1).}
For low-dimensional toy distributions, we set:
\begin{itemize}
    \item 1D distributions (Gaussian, Laplace, GMM): $M=0.1$
    \item 2D distributions (Banana, X-shape, GMM): $M=500$
\end{itemize}
Higher-dimensional targets require larger $M$ to sufficiently refine the proposal distribution, consistent with the analysis in Section~3.4.

\paragraph{UCI Regression (BNN Experiments, Table~2).}
For Bayesian neural network regression, we selected $M$ via cross-validation on held-out validation sets:
\begin{itemize}
    \item Boston, Concrete, Protein, Power: $M=100$
    \item Yacht, Wine: $M=10$
\end{itemize}
These BNN posteriors are relatively low-dimensional (on the order of hundreds of parameters). At $M=100$, acceptance rates remain practical (approximately $50$--$60\%$), yielding substantial accuracy improvements ($4$--$13\%$ NLL reduction) with moderate computational overhead ($1.6$--$2.1\times$).

\paragraph{VAE Experiments.}
For large-scale VAE training, we fix $M=1$ for all datasets:
\begin{itemize}
    \item MNIST
    \item CIFAR-10
    \item ImageNet 64$\times$64
\end{itemize}
Unlike BNN tasks, VAE training involves significantly larger datasets (e.g., 60K for MNIST, 50K for CIFAR-10, and over 1M samples for ImageNet). To maintain reasonable training time, we use a small $M$, which yields high acceptance rates (approximately $80\%$) while still demonstrating the benefit of rejection sampling. These experiments primarily serve to validate the general applicability of IVRS across different model classes.
\subsection{Training and Inference with Rejection Sampling}
\label{sec:ablation_train_test}

We clarify the role of rejection sampling during both training and inference in IVRS, as this point may be a source of potential confusion.

\paragraph{Training Phase.}
During training, rejection sampling is an integral component of IVRS. The variational distribution is defined as the resampled distribution
$r_{\theta,\phi}(z \mid x)$, and all expectations in the proposed IR-ELBO are taken with respect to this distribution. Consequently, rejection sampling is explicitly applied during training to generate samples from $r_{\theta,\phi}(z \mid x)$, as described in Algorithm~1 and Algorithm~2.

\paragraph{Inference Phase.}
At test time, posterior samples are also drawn from the same resampled variational distribution $r_{\theta,\phi}(z \mid x)$ using the identical rejection sampling procedure. This ensures consistency between training and inference, and allows the improved posterior approximation obtained by IVRS to directly translate into better predictive performance.

\paragraph{Practical Considerations.}
In practice, the acceptance rate is controlled by the hyperparameter $M$ and remains sufficiently high in all reported experiments (see Appendix~\ref{sec:ablation_M_values}). For large-scale settings such as VAE training, we adopt a smaller $M$ to balance computational efficiency and posterior refinement. Importantly, no additional approximation is introduced at test time beyond the rejection sampling mechanism already used during training.

\subsection{Sensitivity to Discriminator Optimization}
\label{sec:ablation_disc}
The quality of the discriminator is a critical part of the method. In our framework the discriminator is not a peripheral component: it is used to approximate the density-ratio quantity that enters the acceptance probability, the induced resampled posterior, and the practical training objective. If the discriminator is inaccurate, the resulting error does not remain local but propagates through the entire procedure. The idealized analysis of Proposition~\ref{prop:mono} should therefore be understood as relying on a sufficiently accurate---in the strongest case optimal---discriminator. In practical optimization, however, the discriminator is only approximately trained, so the acceptance function, the resampled posterior, and the optimized objective are all approximate; we therefore do not claim a general theoretical guarantee that approximation errors in the discriminator cannot degrade the posterior approximation. What is optimized in practice is better viewed as a discriminator-parameterized surrogate objective, which is also why Algorithm~\ref{alg2} updates the discriminator and the variational model in alternation rather than fully solving the discriminator subproblem at each outer step.

Our experiments show that the method is sensitive to discriminator optimization, but not in the trivial sense that stronger discriminator training always helps---overly aggressive updates can degrade both acceptance behavior and downstream posterior quality. On the Concrete dataset, under a fixed evaluation protocol with $100$ accepted posterior samples, we observe the behavior in Table~\ref{tab:disc}: a moderate setting is stable, whereas making the discriminator much stronger (more inner steps) or updating it too frequently drives the acceptance rate toward zero and substantially worsens RMSE/NLL. This indicates that the main failure mode is not underfitting of the discriminator, but \emph{imbalance} in the alternating optimization between the discriminator and the variational model; the sensitivity is primarily a training-dynamics issue rather than a purely approximation-theoretic one.

\begin{table}[h]
\centering
\begin{tabular}{lcc}
\toprule
Discriminator setting & RMSE & NLL \\
\midrule
moderate (\texttt{every=5, steps=2}) & 5.6974 & 3.0267 \\
strong (\texttt{steps=25}) & 11.3274 & 3.9247 \\
frequent (\texttt{every=1}) & 8.8113 & 3.6260 \\
\bottomrule
\end{tabular}
\caption{Effect of discriminator optimization on IVRS (Concrete, $100$ accepted samples). The ``strong'' setting drives the acceptance rate down to approximately $0.0002$.}
\label{tab:disc}
\end{table}

\subsection{Effect of Latent Dimensionality}
\label{sec:ablation_dim}
Although our approach uses a smooth rejection-inspired mechanism rather than classical exact rejection sampling, it still inherits a related scalability issue: as the effective latent dimensionality grows, acceptance behavior can deteriorate and tuning becomes more delicate, so the method is not immune to a curse-of-dimensionality effect. On MNIST, varying the latent dimensionality from $16$ to $392$ under a fixed evaluation protocol gives the results in Table~\ref{tab:dim}: the best NLE is achieved in a moderate range (around $32$--$64$ dimensions), while larger latent spaces gradually reduce the calibrated acceptance rate and worsen NLE. Once the latent space becomes too large, the proposal is harder to refine effectively by rejection-style resampling. Overall, IVRS is most suitable when one has a moderately expressive proposal, manageable latent dimensionality, and sufficient room for posterior refinement through selective resampling.

\begin{table}[h]
\centering
\begin{tabular}{lcccccc}
\toprule
Latent dim & 16 & 32 & 64 & 128 & 256 & 392 \\
\midrule
NLE & 83.65 & 81.48 & 81.58 & 82.10 & 82.72 & 82.83 \\
Accept.\ rate & 0.1563 & 0.1319 & 0.1295 & 0.1255 & 0.1177 & 0.1127 \\
\bottomrule
\end{tabular}
\caption{Effect of latent dimensionality on IVRS (MNIST). ``Accept.\ rate'' is the calibrated acceptance rate.}
\label{tab:dim}
\end{table}

\subsection{Decomposing the Source of the Gains}
\label{sec:ablation_decomp}
To understand where the improvements come from, we perform controlled decomposition ablations on the Concrete dataset using the same backbone, training protocol, and evaluation budget, while isolating four cases: \texttt{baseline} (implicit train + implicit test), \texttt{reweight\_only} (IVRS-style train + implicit test), \texttt{selection\_only} (implicit train + IVRS-style test), and \texttt{full\_ivrs} (IVRS-style train + IVRS-style test). Table~\ref{tab:decomp} reports RMSE/NLL under two discriminator settings ($M{=}1$ and $M{=}0.1$, both with \texttt{disc-update-every=5}, \texttt{disc-steps=1}). The gain cannot be attributed to density-ratio reweighting alone, and ``implicit regularization'' from the training objective by itself is not consistently beneficial; rather, the improvement comes from the \emph{interaction} between reweighting and selection rather than from either component in isolation.

\begin{table}[h]
\centering
\begin{tabular}{lcc}
\toprule
Configuration & $M{=}1$ (RMSE/NLL) & $M{=}0.1$ (RMSE/NLL) \\
\midrule
\texttt{baseline} & 7.068 / 3.4322 & 7.068 / 3.4322 \\
\texttt{reweight\_only} & 7.0230 / 3.4567 & 9.2461 / 3.3719 \\
\texttt{selection\_only} & 9.4048 / 3.5626 & 8.0144 / 3.6206 \\
\texttt{full\_ivrs} & \textbf{6.2992 / 3.1451} & \textbf{6.7390 / 3.2773} \\
\bottomrule
\end{tabular}
\caption{Controlled decomposition ablation on Concrete. Only the full combination of reweighting (train) and selection (test) consistently improves over the baseline.}
\label{tab:decomp}
\end{table}

\section{Limitations and Additional Discussion}
Despite the clear advantages of IVRS, which combines the accuracy of rejection sampling with the flexibility of implicit distributions, there are some limitations. One key limitation is the need for empirical hand-tuning of the hyperparameter $M$ in the acceptance probability function. Additionally, while our experiments have shown improved bpd scores on standard datasets like CIFAR-10, further testing on more complex and higher-dimensional datasets is necessary to fully assess the robustness of our approach.


\subsection{Robustness of the Method in High-Dimensional Settings}
We acknowledge the  challenges in high-dimensional scenarios due to the curse of dimensionality. To mitigate this, our approach leverages implicit proposal distributions parameterized by neural networks, which are designed to closely approximate the target distribution. This reduces the rejection rate and improves efficiency even in higher dimensions.

In our experiments, we have evaluated the proposed method on datasets with moderately high-dimensional posterior distributions (e.g., several hundred dimensions). Results demonstrate that the acceptance rates remain manageable, and the method performs favorably compared to baseline variational inference (VI) approaches. We recognize, however, that testing on more extreme high-dimensional cases (e.g., thousands of dimensions) could provide additional insights. This is a valuable direction for future work to implement more scalable architectures to further explore this aspect.

\subsection{Computational Efficiency Compared to Other Methods}
Computational overhead is a critical factor in assessing the practicality of our approach. The addition of the discriminator network introduces some computational cost, particularly for estimating the density ratio and acceptance probability. To address this issue:

\begin{itemize}
    \item \textbf{Comparison with Baselines:} In our experiments, we observed that the computational cost of training the discriminator is offset by the reduction in bias due to tighter approximation of the Evidence Lower Bound (ELBO). Compared to standard VI methods, our approach exhibits a trade-off where the marginal improvement in accuracy justifies the additional computation.
    \item \textbf{Efficiency in Large-Scale Settings:} To improve scalability, we used a lightweight architecture for the discriminator and optimized its training through mini-batch techniques. However, we acknowledge that in large-scale datasets or extremely high-dimensional problems, further optimization (e.g., parallelization or approximate methods) may be necessary.
    \item \textbf{Maximum Iteration Limit:} We introduce a maximum iteration limit for the rejection sampling process to prevent it from getting stuck in an infinite loop
\end{itemize}

\subsection{Cost and Difficulty of Training the Discriminator (Eq. \ref{9})}

In our framework, the discriminator approximates the density ratio between the proposal distribution $q_\phi(z|x)$ and the true posterior $p(z|x)$, which determines the acceptance probability in the rejection sampling step. It is parameterized using the same architecture as in AVB and trained with a standard binary cross-entropy loss. As a result, the computational cost remains comparable to AVB, and no additional model complexity is introduced.

The discriminator directly influences the sampling process by defining the acceptance probability $a(z; x, \theta, \phi)$, making its accuracy crucial to overall performance. However, we found that full convergence of the discriminator at each update step is not necessary in practice. Instead, we adopt an alternating optimization strategy, where the discriminator is updated for a few epochs (typically 5–10) before updating the variational parameters, similar to the alternating training scheme used in Generative Adversarial Networks (GANs).

\subsection{Manual Tuning of the Hyperparameter $M$}

The hyperparameter $M$ controls the acceptance threshold in rejection sampling and directly governs the trade-off between posterior accuracy and computational efficiency. In our experiments, $M$ is selected via cross-validation on held-out validation sets.

As demonstrated in the ablation studies in Appendix~\ref{sec:ablation}, increasing $M$ consistently improves posterior approximation quality, as reflected by lower NLL and RMSE, while simultaneously reducing the acceptance rate and increasing computational cost. This behavior aligns with the theoretical analysis in Section~3.4, which shows that the KL divergence between the resampled variational distribution and the true posterior decreases monotonically with $M$.

Importantly, we find that IVRS is not overly sensitive to the exact choice of $M$ within a reasonable range. Moderate values of $M$ already provide substantial accuracy gains over the implicit proposal distribution, while maintaining practical acceptance rates. This allows $M$ to be chosen using lightweight cross-validation without extensive tuning.

While automatic or adaptive selection of $M$ is an interesting direction for future work, our results indicate that the current strategy provides a reliable and interpretable balance between accuracy and efficiency across a wide range of tasks.

\end{document}